\documentclass{applemlr}

\usepackage{amsmath}
\usepackage{enumerate}
\usepackage{algorithm}
\usepackage{algpseudocode}
\usepackage{amsfonts}
\usepackage{amsthm}
\usepackage{cleveref}
\usepackage{diagbox}
\usepackage{colortbl}
\usepackage{amssymb}
\usepackage{xspace}
\usepackage{wrapfig}
\usepackage{adjustbox}
\usepackage{tabularx}
\usepackage{booktabs}
\usepackage{mathtools}
\usepackage{tikz}
\usepackage{enumitem}
\usepackage{silence}
\usepackage{dsfont}
\usepackage[table]{xcolor}
\usepackage[dvipsnames]{xcolor}
\usepackage{multirow}
\usepackage{makecell}
\usepackage{xfakebold}
\usepackage{amsmath,amsfonts,bm}

\def\eqref#1{equation~\ref{#1}}
\def\1{\bm{1}}

\DeclareMathAlphabet{\mathsfit}{\encodingdefault}{\sfdefault}{m}{sl}
\SetMathAlphabet{\mathsfit}{bold}{\encodingdefault}{\sfdefault}{bx}{n}

\newcommand{\E}{\mathbb{E}}

\newcommand{\R}{\mathbb{R}}

\newcommand{\softmax}{\mathrm{softmax}}

\definecolor{textgray}{HTML}{6E6E73}
\usetikzlibrary{positioning, calc}
\usetikzlibrary{decorations.pathmorphing}

\makeatletter
\patchcmd{\wrong@fontshape}{\@gobbletwo}{}{}{}
\makeatother
\numberwithin{equation}{section}
\makeatletter
\AtBeginDocument{
  \urlstyle{sf}
  
}
\makeatother

\definecolor{light}{RGB}{125, 125, 125}
\crefname{tcb@cnt@pbox}{code}{code}
\Crefname{tcb@cnt@pbox}{Code}{Code}
\crefname{assumption}{assumption}{assumption}
\Crefname{assumption}{Assumption}{Assumptions}

\newtcolorbox[auto counter]{pbox}[2][]{
  colback=white,
  title=Code~\thetcbcounter: #2,
  #1,fonttitle=\sffamily,
  fontupper=\sffamily,
  arc=2pt,
  colframe=bgcolor,
  coltitle=fgcolor,
  colbacktitle=bgcolor,
  toptitle=0.25cm,
  bottomtitle=0.125cm
}

\makeatletter
\newcommand\applefootnote[1]{%
  \begingroup
  \renewcommand\thefootnote{}%
  \renewcommand\@makefntext[1]{\noindent##1}%
  \footnote{#1}%
  \addtocounter{footnote}{-1}%
  \endgroup
}
\makeatother

\definecolor{cverbbg}{gray}{0.90}

\usepackage[utf8]{inputenc}
\usepackage[T1]{fontenc}
\usepackage{hyperref}
\usepackage{url}
\usepackage{booktabs}
\usepackage{amsfonts}
\usepackage{amsmath}
\usepackage{amsthm}
\usepackage{amssymb}
\usepackage{nicefrac}
\usepackage{xcolor}
\usepackage{enumitem}
\usepackage{multirow}
\usepackage{graphicx}
\usepackage{wrapfig}
\usepackage{colortbl}
\usepackage{adjustbox}
\usepackage{multirow}
\usepackage{graphicx}    % for \rotatebox in multirow
\usepackage{subcaption}
\usepackage{pifont}
\usepackage{lmodern} 

\usepackage{amsthm}
\newtheorem{theorem}{Theorem}[section]

\newtheorem{proposition}[theorem]{Proposition}

\theoremstyle{definition}
\newtheorem{definition}[theorem]{Definition}
\theoremstyle{remark}

\newcommand{\memblock}{\textsc{MemoryBlock}}
\newcommand{\embmem}{\textsc{EmbeddingMemory}}

\newcommand{\V}{\mathcal{V}}

\newcommand{\bzero}{\mathbf{0}}
\newcommand{\balpha}{\boldsymbol{\alpha}}
\newcommand{\RMSNorm}{\operatorname{RMSNorm}}
\newcommand{\FFN}{\operatorname{FFN}}
\newcommand{\Attn}{\operatorname{Attn}}
\newcommand{\norm}[1]{\left\lVert #1 \right\rVert}
\newcommand{\tide}{\textsc{TIDE}}
\newcommand{\MB}{\textsc{memoryblock}}

\newcommand{\cL}{\mathcal{L}}

\usepackage{booktabs,xcolor,colortbl,array}
\definecolor{rarerow}{RGB}{237,231,252}
\definecolor{midrow}{RGB}{245,245,245}
\definecolor{commonrow}{RGB}{220,242,233}
\definecolor{rarecol}{RGB}{83,74,183}
\definecolor{midcol}{RGB}{80,80,80}
\definecolor{commoncol}{RGB}{15,110,86}
\definecolor{lightgray}{gray}{0.85}

\title{TIDE: Every Layer Knows the Token Beneath the Context }

\author{Ajay Jaiswal}
\author{Lauren Hannah}
\author{Han-Byul Kim}
\author{Duc Hoang}
\author{Mehrdad Farajtabar}
\author{Minsik Cho}

\affiliation{Apple}

\abstract{
We revisit a universally accepted but under-examined design choice in every modern LLM: a token index is looked up once at the input embedding layer and then permanently discarded. This \emph{single-injection assumption} induces two structural failures: (i) the \textit{Rare Token Problem}, where a Zipf-type distribution of vocabulary causes rare-token embeddings are chronically under-trained due to receiving a fraction of the cumulative gradient signal compared to common tokens; and (ii) the \textit{Contextual Collapse Problem}, where limited parameters models map distributionally similar tokens to indistinguishable hidden states. As an attempt to address both, we propose \tide{}, which augments the standard transformer with \embmem{}: an ensemble of $K$ independent \memblock{}s that map token indices to context-free semantic vectors, computed once and injected into every layer through a depth-conditioned softmax router with a learnable null bank. We \emph{theoretically} and \emph{empirically} establish the benefits of \tide{} in addressing the issues associated with single-token identity injection as well as \emph{improve performance} across multiple language modeling and downstream tasks. }

\metadata[Correspondence]{\sffamily Ajay Jaiswal: \url{ajaiswal23@apple.com}}
\date{\sffamily\today}

\begin{document}

\maketitle

\section{Introduction}
\label{sec:introduction}

Scaling modern large language models (LLMs) involves devoting substantial representational capacity towards \textit{contextualizing} tokens through innovating attention mechanisms, enlarging feed-forward modules, and stacking deep transformer layers. In contrast, a critical LLM component that has been widely overlooked in recent advancements is the \textbf{token index} - \textit{the only piece of information that unambiguously identifies what a token is}. The token index is looked up once at the input embedding layer and then permanently discarded. Every subsequent computation across all $L$ transformer layers operates on a contextualized hidden state that never again directly consults which vocabulary entries are being processed. This single-injection assumption creates \underline{two} distinct failure modes:

\ding{182} \textbf{The Rare Token Problem:} Natural language vocabularies obey power law scaling, specifically Zipf's law \citep{zipf1949human, pilgrim2021bias}: the most frequent 1\% of tokens account for $\sim\!80\%$ of corpus occurrences. Under SGD, cumulative gradient signal for each token embedding is proportional to its frequency (Section \ref{sec:grad_starvation}), leaving rare-token embeddings (e.g. rare named entities, technical terms, low-frequency morphological forms) persistently under-trained (Figure \ref{fig:rare-token-embeddings}).

\ding{183} \textbf{Contextual Hidden State Collapse:} During training, FFNs are forced into \emph{representational overloading} where they simultaneously implement structural transformations of the residual stream and serve as the primary store of token-specific factual knowledge \citep{meng2022rome, dai2022knowledgeneurons}. The token index is never re-consulted at intermediate layers, and the only mechanism the FFNs have to differentiate two tokens at depth relies on contextual mixture of residual and attention output. However, in case when two semantically distinct tokens appear in nearly identical syntactic environments, the context provides limited differentiating signal and their hidden states become nearly indistinguishable across the network (Figure \ref{fig:collapse}).

Motivated by these challenges, we pose a critical question: \textit{How can we provide every transformer layer with persistent, token-identity-conditioned knowledge, independent of and complementary to the contextual residual stream?} Unlike prior approaches that focus on post-hoc analysis of {\it de facto} FFNs \citep{geva2022vocabspace,meng2022rome,meng2023memit} or retrofit external retrieval at inference time \citep{lewis2020rag,borgeaud2022retro,izacard2023atlas}, we adopt an alternative approach:
designing and training from scratch a novel transformer architecture that maintains
a dedicated semantic memory indexed directly by static token identity information.
 
In this work, we propose \tide{} (\textbf{T}oken \textbf{I}dentity \textbf{D}elivered \textbf{E}verywhere), an architectural modification to standard transformer that maintains a dedicated semantic memory indexed directly by token identity (Figure \ref{fig:main_diagram}). \tide{}  introduces \embmem{}, an ensemble of $K$ independent \memblock{}s each mapping token indices to static and context-free learned semantic vectors that can be injected to each transformer layer with a persistent, token-conditioned signal in parallel to the contextual residual stream. Our key contributions can be summarized as:

\begin{itemize}
\item \textbf{Architectural.} \tide{} introduces a \emph{token-level unified embedding memory} that enables  $K$ disjoint pathways for token-level gradient accumulation. The tensor from memory embeddings tensor is computed \emph{once per forward pass} and injected into \emph{every} transformer layer via a per-layer softmax routing mechanism conditioned on the post-attention hidden state.

\item \textbf{Theoretical.} We formalize the two failure mode in standard transformer and prove that \tide{} (i) asymptotically generalizes the standard transformer; (ii) amplifies the per-token cumulative gradient signal by a factor of $K$, and (iii) routes around the FFN's Lipschitz constraint by exposing a discrete, token-indexed input with no obligation to hidden states.

\item \textbf{Empirical.} We empirically validate that \tide{} significantly benefits \emph{rare} tokens and mitigates contextual collapse problem. Across model scales from $350$M to $1$B parameters, \tide{} consistently delivers up to performance improvements over standard transformer across various language modeling datasets (\emph{e.g.,} Wikitext, PubMed, DCLM) as well as downstream tasks (\emph{e.g.,} HellaSwag, ARC, PIQA).

\end{itemize}

\begin{figure}
    \centering
    \includegraphics[width=0.99\linewidth]{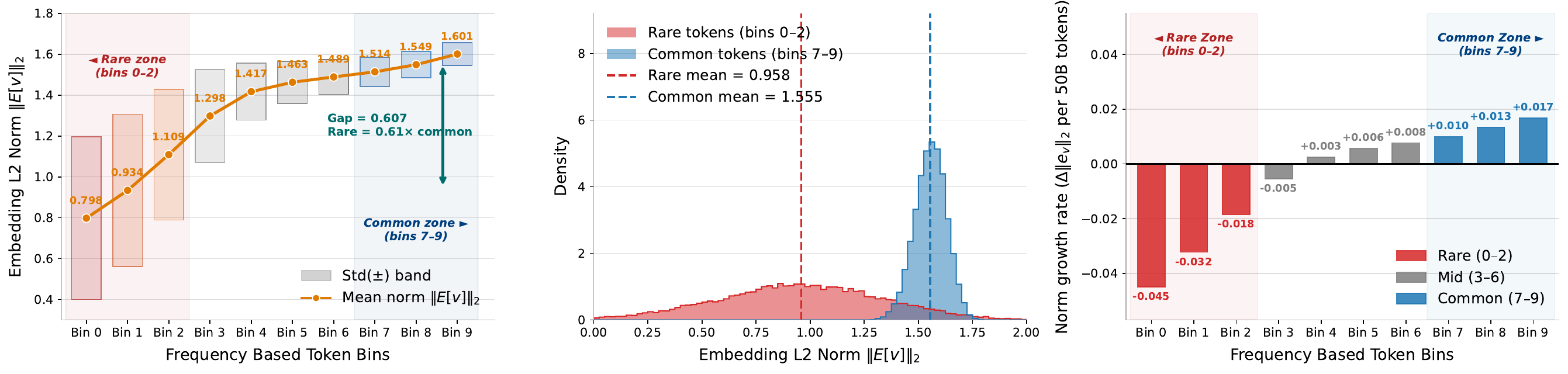}
    \caption{\textbf{Empirical Evidence that Rare Token Embeddings Remain Under-trained:} (a) Mean embedding $l2$-norm of LLaMa-\texttt{Base}-1B pretrained checkpoint showing a \underline{monotonic increase} in norm from rare to common bins; (b) Embedding norm distributions for rare and common tokens: existence of \textit{wide \underline{rare} distribution versus the \underline{narrow} common peak} confirms that rare embeddings remain noise-dominated and under-trained; (c) Bin-wise norm growth rate across intermediate training checkpoints per 50 billion tokens: rare token norms exhibit \textit{\underline{monotonic decline}} with continued training while common token continue growing.}
    \label{fig:rare-token-embeddings}
\end{figure}

\section{When Context is Not Enough: Diagnosing Standard Transformers}
\label{sec:single_token_injection_problem}
\subsection{The Rare Token Problem.}

\paragraph{Gradient Starvation Bottleneck: }\label{sec:grad_starvation}
Under minibatch SGD with batch size $B$, sequence length $T$, and per-token squared gradient norm bounded by $G^{2}$, the embedding $e_v \in \R^{d}$ for token $v$ receives a non-zero gradient only when $v$ appears in the current batch. In this setting, the expected cumulative squared gradient norm after $\tau$ training steps satisfies:
\begin{equation}
  \E\!\left[\sum_{s=1}^{\tau}\norm{\nabla_{e_v}\mathcal{L}_s}^{2}\right]
  \;\leq\; \tau \cdot f_v \cdot B \cdot T \cdot G^{2},
  \label{eq:grad_starvation}
\end{equation}
where $f_v := \Pr[\text{uniformly drawn token position equals } v] \in (0,1)$ is the unigram probability of $v$, with $\sum_{v \in \mathcal{V}} f_v = 1$. Token $v$ is \emph{rare} if $f_v = \epsilon$ for some $\epsilon \ll 1/(BT)$, and token $u$ is \emph{common} if $f_u \geq c$ for some constant $c > 0$ independent of $|\mathcal{V}|$. The full derivation of~\eqref{eq:grad_starvation} is given in Appendix~\ref{app:grad_starvation_proof}.

\begin{table}[h]
  \centering
  \setlength{\tabcolsep}{5pt}   
  \begin{tabular}{lcccc}
    \toprule
    \textbf{Tier} & \textbf{Bin(s)} &
    \textbf{Count} & $f_{v}$ & $\E[N_{v}]$ \\
    \midrule
    \rowcolor{rarerow}
    Hapax (rarest)   & 0    & 1
      & $8.3\!\times\!10^{-9}$  & $\approx$\textcolor{red}{1,660}  \\
    \rowcolor{rarerow}
    Near-hapax       & 1    & $\sim$4
      & $3.3\!\times\!10^{-8}$  & $\approx$6,640  \\
    \rowcolor{rarerow}
    Uncommon         & 2    & $\sim$10
      & $8.3\!\times\!10^{-8}$  & $\approx$16,600 \\
    \rowcolor{midrow}
    Mid-freq.        & 3--6 & ${\sim}10^{2\text{--}3}$
      & ${\sim}10^{-6}$         & ${\approx}10^{5\text{--}6}$ \\
    \rowcolor{commonrow}
    Common (highest) & 7--9 & ${\sim}10^{6}$
      & $8.3\!\times\!10^{-3}$  & $\approx$\textcolor{blue}{\textbf{$1.66\!\times\!10^{9}$}} \\
    \bottomrule
  \end{tabular}
  \caption{Expected non-zero gradient updates $\E[N_{v}]$ of token bins with 200B training tokens.}
  \label{tab:wikitext103}
\end{table}

In an example corpus of Wikitext-103 \citep{merity2016pointer} tokenized using LLaMA-3 tokenizer ($|\mathcal{V}|=128{,}256$) to generate frequency bins (Appendix \ref{app:binning}), the gradient disparity between \emph{rare} and \emph{common} tokens becomes severe. Over a training budget of 200B tokens with $B=8$, $T=2048$, the expected number of non-zero gradient updates to token $v$'s embedding is given as:
\begin{equation}
  \E[N_{v}]
  \;=\;
  \tau\bigl(1-(1-f_{v})^{BT}\bigr)
  \;\approx\;
  \tau\cdot f_{v}\cdot BT
  \quad\text{for small }f_{v}\,.
  \label{eq:expected_updates}
\end{equation}
In reference to frequency bins defined in Appendix \ref{app:binning}, Table ~\ref{tab:wikitext103} instantiates this across the $200$B tokens in our training dataset illustrating the existence of high gradient update disparity between rare and common tokens. Additionally, it can empirically inferred from the Figure \ref{fig:rare-token-embeddings}(c) that this disparity doesn't limit itself as a cold-start artifact but grows \emph{monotonically} as the training progresses. The rare tokens' norms decline while common tokens' norms continuously increase.

\textbf{\textit{Ratio of gradient signal for rare and common tokens:}} For rare $v$ ($f_{v}=\varepsilon$) and common $u$ ($f_{u}\geq c>0$), let $G_{\min}^{2}>0$ be a lower bound on the per-step squared gradient norm conditioned on token $u$ appearing in a batch. The ratio of cumulative gradient signals satisfies:
\begin{equation}
  \frac{\E\!\left[\sum_{s}\|\nabla_{e_{v}}\cL_{s}\|^{2}\right]}
       {\E\!\left[\sum_{s}\|\nabla_{e_{u}}\cL_{s}\|^{2}\right]}
  \;\leq\;
  \frac{\varepsilon\, BT\, G^{2}}{\kappa\, G_{\min}^{2}}
  \;=\; O\!\left(\varepsilon/c\right)
  \label{eq:grad_ratio}
\end{equation}
where $\kappa := 1-(1-c)^{BT} > 0$ with $BT$, $G^{2}$, and $G_{\min}^{2}$ as fixed positive constants. The full derivation is given in Appendix~\ref{app:ratio}. For the empirical instantiation in Table \ref{tab:wikitext103}, the ratio between rare tokens (Bin 1) and common tokens (Bin 9) is $\varepsilon/c\approx10^{-6}$, a disparity of six orders of magnitude of gradient signal between rare and common tokens over the same training budget.

\begin{figure}
    \centering
    \includegraphics[width=0.99\linewidth]{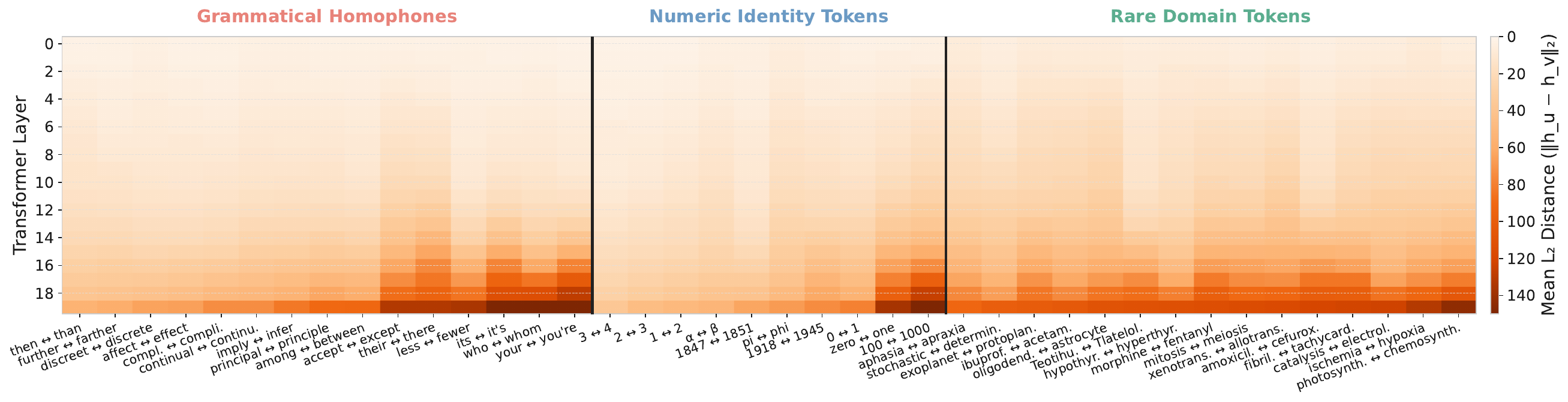}
    \caption{\textbf{Empirical Evidence of Contextual Collapse:} Heatmap illustrating the mean $\ell_2$-distance $\|h^{(\ell)}_u - h^{(\ell)}_v\|$ between hidden states (LLaMa-\texttt{Base}-1B) of token pairs across 250 template sentences from three example categories of contextual collapse. For all sampled pairs, the distance remains \textit{near-zero} for majority of layers (except towards end) confirming the presence of contextual collapse.}
    \label{fig:collapse}
\end{figure}

\subsection{Contextual Collapse and the FFN’s Blind Spot.}
\label{sec:contextual_collapse}
 
As mentioned before, the gradient starvation issue causes the rare-token embeddings to converge to low-norm, noisy representations. More seriously, when two distinct tokens carry poorly trained embeddings of similar magnitude, a deeper structural failure arises: the hidden states produced for those tokens across all transformer layers may become indistinguishable, which can more problematic with similar context shared.   We formalize this failure mode and show that it is an inherent consequence of the Lipschitz continuity imposed on any FFN by its continuous domain.

\paragraph{The Contextual Collapse Phenomenon:} At each layer $\ell$, the hidden state $h^{(\ell)}_v \in \mathbb{R}^d$ of a token $v$ is produced by the attention mechanism operating on the surrounding context.  When two tokens $u \neq v$ appear in nearly identical syntactic environments, such as in case for grammatical homophones (\emph{their} or \emph{there}), numeric identity tokens (\emph{1847}, \emph{1851}, or \emph{1849}), or rare domain-specific synonyms (\emph{ibuprofen} or \emph{acetaminophen}), the context provides no distinguishing signal and thereby attention produces similar outputs for both. 

We formally define this as:

\begin{definition}[Contextual Collapse Set]
For a tolerance $\delta > 0$, the \emph{contextual collapse set} at layer $\ell$ can be formally defined as:
\[
  \mathcal{C}_\delta^{(\ell)}
  \;:=\;
  \bigl\{(u,v)\in\mathcal{V}^2 : u\neq v,\;
  \|h^{(\ell)}_u - h^{(\ell)}_v\|\le\delta\bigr\},
\]
where the hidden states are averaged over a representative corpus of contexts.
\end{definition}

Figure~\ref{fig:collapse} provides direct empirical evidence of contextual collapse in LLaMa-\texttt{Base}-1B standard model estimated using 150 template sentences that differ by a single token pair under consideration. For each of the three example canonical categories, the mean $\ell_2$ distance $\|h^{(\ell)}_u - h^{(\ell)}_v\|$ remains persistently small across the entire depth axis except the last few final layers, confirming the prevalent existence of collapse. Note that this phenomenon is more severe across numerical tokens category having notable collapse (small $\delta$) even within the final layer's hidden states.     

\begin{proposition}[FFN Approximation Lower Bound on Collapsed Tokens]
\label{prop:ffn_lower_bound}
Let $(u,v)\in\mathcal{C}_\delta^{(\ell)}$ be a collapsed token pair and let
$g:\mathcal{V}\to\mathbb{R}^d$ be any target function satisfying
$\|g(u) - g(v)\| = C > 0$.  Then for \emph{any} choice of weights $W_1, W_2$:
\[
  \max\bigl\{\|\mathrm{FFN}(h_u) - g(u)\|,\;\|\mathrm{FFN}(h_v) - g(v)\|\bigr\}
  \;\ge\;
  \frac{C - L_{\mathrm{FFN}}\,\delta}{2}.
\]
When $C > L_{\mathrm{FFN}}\,\delta$, the right-hand side is strictly positive:
the FFN cannot approximate $g$ to arbitrary precision on the collapsed pair
$(u,v)$, regardless of how many parameters it has.
\end{proposition}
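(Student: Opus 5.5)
The argument is a triangle-inequality estimate combined with the Lipschitz property of the FFN. Throughout, $L_{\mathrm{FFN}}$ denotes a Lipschitz constant of $h\mapsto \mathrm{FFN}(h)=W_2\,\sigma(W_1 h)$ on the domain of hidden states. For a fixed choice of weights one may take, for example, $L_{\mathrm{FFN}}\le \|W_2\|\,L_\sigma\,\|W_1\|$, where $L_\sigma$ is the Lipschitz constant of the activation. The statement should therefore be read with $L_{\mathrm{FFN}}$ attached to the weights in question, or as a uniform bound over an admissible weight class. I would make this explicit at the start. Otherwise ``for any $W_1,W_2$'' with a fixed constant is not meaningful, because scaling the weights scales the Lipschitz constant.

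\emph{Step 1: control the FFN outputs.} Since $(u,v)\in\mathcal{C}_\delta^{(\ell)}$, we have $\|h_u-h_v\|\le\delta$. The Lipschitz property then gives
\[
\|\mathrm{FFN}(h_u)-\mathrm{FFN}(h_v)\|\le L_{\mathrm{FFN}}\,\delta .
\]

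\emph{Step 2: decompose the target gap.} Write $e_u:=\mathrm{FFN}(h_u)-g(u)$ and $e_v:=\mathrm{FFN}(h_v)-g(v)$. Then
\[
g(u)-g(v) = -e_u + \bigl(\mathrm{FFN}(h_u)-\mathrm{FFN}(h_v)\bigr) + e_v ,
\]
and the triangle inequality yields
\[
C\le \|e_u\|+\|e_v\|+L_{\mathrm{FFN}}\,\delta .
\]

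\emph{Step 3: pass to the maximum.} Since $\|e_u\|+\|e_v\|\le 2\max\{\|e_u\|,\|e_v\|\}$, we obtain $\max\{\|e_u\|,\|e_v\|\}\ge (C-L_{\mathrm{FFN}}\delta)/2$. Strict positivity when $C>L_{\mathrm{FFN}}\delta$ follows immediately.

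\emph{Main obstacle: interpretation rather than computation.} First, the collapse set is defined with hidden states averaged over contexts. I would apply the bound to those averaged representatives $h_u,h_v$, which is exactly what the statement uses. I would also note that per-context hidden states would only make the inequality hold pointwise whenever each context pair is $\delta$-close. Second, the clause ``regardless of how many parameters'' needs care. Width enters only through $L_{\mathrm{FFN}}$, so the claim holds whenever the Lipschitz constant is bounded, for instance through norm constraints on the weights or normalization. I would state this as the precise hypothesis.
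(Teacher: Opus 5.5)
Your proof is correct and matches the paper's argument step for step: you use the Lipschitz bound on the collapsed pair, the three-term triangle-inequality decomposition of $C$, and the fact that the maximum is at least half the sum. Your clarification that $L_{\mathrm{FFN}}$ must be tied to the chosen weights (or bounded uniformly over an admissible weight class) is sound. The paper addresses this point only informally, in its discussion after the proof.
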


\textbf{\textit{Proof sketch.}}
Since $(u,v)\in\mathcal{C}_\delta^{(\ell)}$, the Lipschitz bound forces $\|\mathrm{FFN}(h_u)-\mathrm{FFN}(h_v)\|\le L_{\mathrm{FFN}}\delta$. Applying the triangle inequality to the target separation $C = \|g(u)-g(v)\|$ and substituting this bound yields $\|\mathrm{FFN}(h_u)-g(u)\| + \|\mathrm{FFN}(h_v)-g(v)\| \ge C - L_{\mathrm{FFN}}\delta$. Since the maximum of two non-negative terms is at least half their sum, the result follows. See Appendix~\ref{app:ffn_lower_bound_proof} for details.

In this bound, $\delta$ is determined by the embeddings and attention layers; it is fixed before the FFN acts. The separation target $C$ is determined by the downstream task.The Lipschitz constant $L_{\mathrm{FFN}}$ is the only term the FFN controls, but it is bounded in practice because large $L_{\mathrm{FFN}}$ amplifies \emph{every} input perturbation, degrading performance on the majority of non-collapsed tokens. The bound exposes a structural limitation: given fixed upstream representations, no FFN, regardless of width, can resolve a collapsed token pair without destabilizing other inputs.  The token index is injected once at the embedding layer and never reintroduced; unlike position, which is re-injected via RoPE at every attention layer, token identity has no recovery mechanism. Once intermediate layers erase the distinction, it is permanently lost to all subsequent computation.

\begin{figure}
    \centering
    \includegraphics[width=0.99\linewidth]{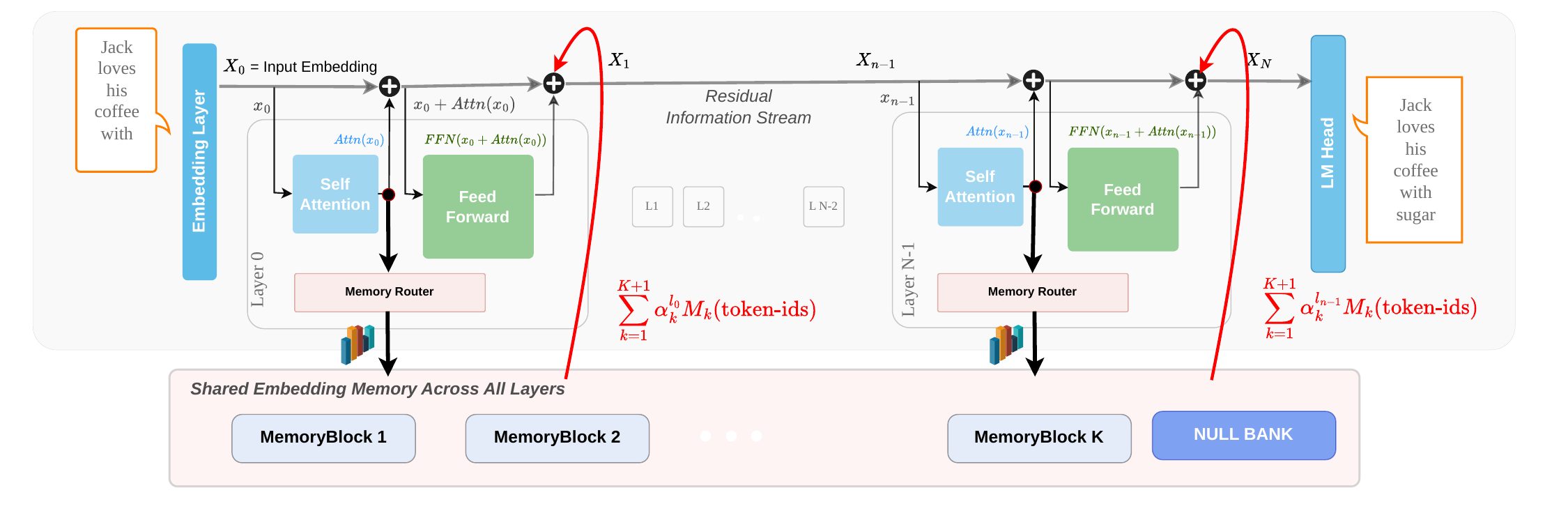}

    \caption{\textbf{Main Architecture Diagram:} \tide{} augments standard transformers with a parallel and globally shared \embmem{} module (red region) consisting of $K$ independent \memblock{}, each mapping raw token indices to \textit{context-free token identity signal}. Each layer uses a linear router to combine memory blocks signals and injects that into the residual stream additively.}

    \label{fig:main_diagram}
\end{figure}

% ── 2. TIDE ───────────────────────────────────────────────────────────────────
\section{TIDE: \underline{T}oken \underline{I}dentity \underline{D}elivered \underline{E}verywhere}
\label{sec:tide}
In section \ref{sec:single_token_injection_problem}, we investigated and formalized two failure mode, \emph{i.e.,} rare token and contextual collapse problem, within the standard transformer architecture. In this work, we address these issues with a novel architecture modification: \tide{} counters the single-injection assumption in conventional design of modern LLMs. \tide{} stops discarding the token identity information after embedding layer and instead make it directly accessible at every depth, so that each layer retains a token-discriminative signal independent of the contextual residual stream. 

% ── 2.1 Preliminaries and Notations ──────────────────────────────────────────

\subsection{Preliminaries and Notations.}
\label{sec:notation}
 
Let $\V$ denote a vocabulary of size $|\V|$, $d$ the model hidden dimension, $d_b$ the \MB{} embedding dimension, $K$ the number of \MB{}s, $L$ the number of transformer layers, $T$ the input sequence length, and $B$ the batch size. We use $x \in \mathbb{Z}^{B \times T}$ for a batch of token index sequences and $h^{(\ell)} \in \R^{B \times T \times d}$ for hidden states at layer $\ell$. The standard LLaMA-style transformer block at layer $\ell$ computes:
\begin{align}
  \tilde{h}^\ell &= h^{\ell-1} + \Attn\!\bigl(\RMSNorm(h^{\ell-1})\bigr),
  \label{eq:attn} \\
  h^\ell &= \tilde{h}^\ell + \FFN\!\bigl(\RMSNorm(\tilde{h}^\ell)\bigr),
  \label{eq:base}
\end{align}
where $\Attn$ is multi-head self-attention with rotary position embeddings and $\FFN$ is a SiLU-gated feed-forward network. The primary embedding table $E \in \R^{|\V| \times d}$ maps each token index to an initial hidden state $h^{(0)} = E[x]$ that will be processed by different transformer blocks.

% ── 2.2 Architecture Design ──────────────────────────────────────────────────
\subsection{TIDE Architecture Design.}
\label{sec:arch}
 
\tide{} augments the standard transformer with a parallel \textit{token-identity memory pathway} composed of three components:
 
\textbf{\MB{}s:}
Each of the $K$ \MB{}s maintains a dedicated embedding table $E_k \in \R^{|\V| \times d_b}$ and maps a token index $v \in \V$ to a $d_b$-dimensional vector via a single embedding lookup followed by RMSNorm \citep{zhang2019rmsnorm}:
\begin{equation}
  M_k(v) = \RMSNorm\!\bigl(E_k[v]\bigr) \in \R^{d_b}.
  \label{eq:memblock}
\end{equation}
Each block maintains its own independent embedding table with no parameter sharing across blocks, encouraging each \MB{} to learn a distinct projection of the token identity space.
 
\textbf{\textsc{EmbeddingMemory} ensemble:}
The $K$ \MB{}s are stacked into a single memory tensor computed \textit{once} per forward pass and shared across all $L$ transformer layers:
\begin{equation}
  \mathbf{M} = \mathrm{Stack}_k\!\bigl(M_k(x)\bigr) \in \R^{B \times T \times K \times d_b}.
  \label{eq:ensemble}
\end{equation}

\textbf{Depth-conditioned router and additive fusion:} Within each transformer block, the post-attention normalised hidden state $\tilde{n}^\ell = \RMSNorm(\tilde{h}^\ell)$ is fed to a lightweight linear router to generate composition ratio $\alpha_k^\ell$ corresponding to $k$-th memory block. We additionally introduce a \textbf{null bank} at slot $K{+}1$ satisfying $M_{K+1}(v) = \bzero$ for all $v$, giving the router a learned ``off'' switch for with no dedicated parameters. The full \tide{} layer update is:

\begin{align}
  \balpha^\ell &= \softmax\!\bigl(W_r^\ell\,\tilde{n}^\ell\bigr) \in \R^{K+1},
  \label{eq:router}
\end{align}
\begin{equation}
  m^\ell(v) = \sum_{k=1}^{K+1} \alpha_k^\ell\, M_k(v),
  \qquad
  h^\ell = \tilde{h}^\ell + \FFN\!\bigl(\tilde{n}^\ell\bigr) + m^\ell(v).
  \label{eq:tide_model}
\end{equation}
where $W_r^\ell \in \R^{(K+1) \times d}$ is a per-layer learned weight matrix and $\sum_{k=1}^{K+1}\alpha_k^\ell = 1$, $\alpha_k^\ell > 0$ for all $k$. The memory vector $m^\ell(v)$ is added \textit{additively} and \textit{independently} of the FFN output: neither pathway interact with the other, preserving the residual stream's role as a shared communication channel \citep{elhage2021circuits}. Given that $\mathbf{M}$ is indexed by discrete token identity $v$, not by hidden state $h^\ell$, the memory contribution of each token is independent of contextual mixing at any depth.

\begin{figure}
  \centering
  \includegraphics[width=0.65\linewidth]{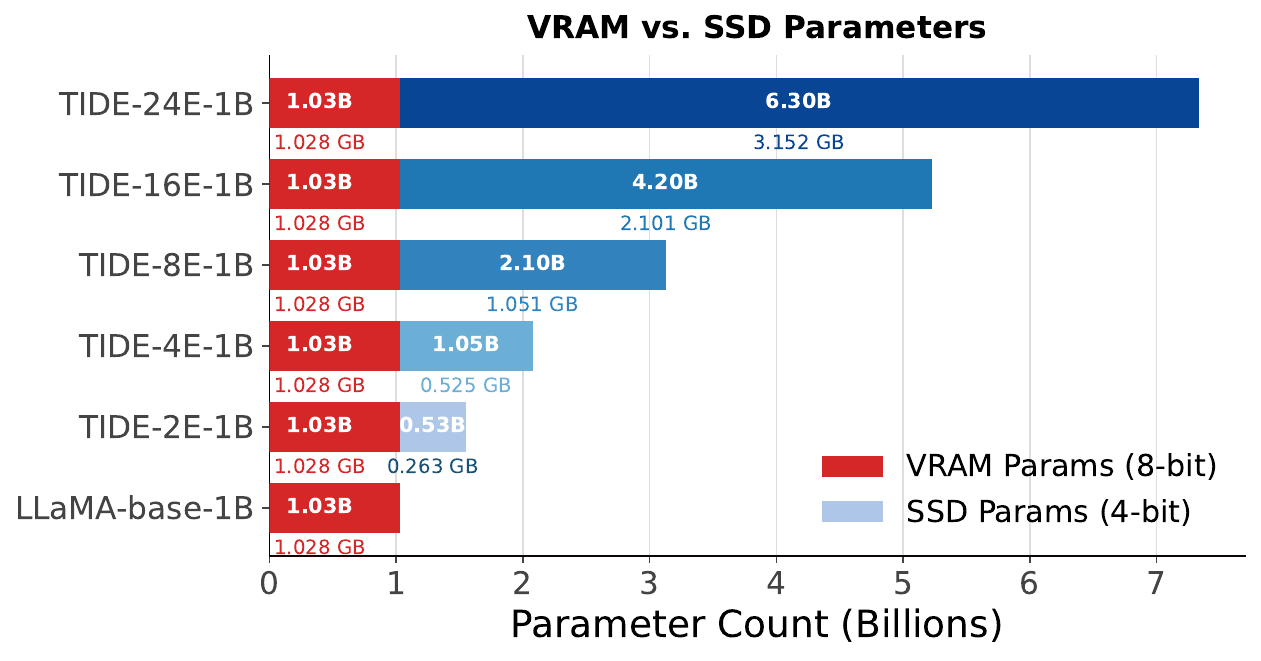}
  \caption{VRAM \& SSD parameter breakdown across LLaMA-\texttt{Base}-1B and \tide{}-1B model family with varying \memblock{} counts $K \in \{2, 4, 8, 16, 24\}$.}
  \label{fig:vram-ssd}
\end{figure}
\textbf{Computational and Memory Overhead:} In \tide{}, each $M_k(v) = \mathrm{RMSNorm}(E_k[v])$ is a single embedding lookup followed by RMSNorm and \textit{contributes no matrix multiplications}, so the per-layer overhead reduces to one $(K{+}1)$-way softmax router and a weighted sum of $d_b$-dimensional vectors. This is negligible relative to the baseline FFN. More importantly, every $E_k$ is indexed by discrete token identity $v$ independent of $h^\ell$, so once training completes the \embmem{} tables are static and can be 4-bit quantized (negligible performance impact) and offloaded to SSD for on-demand asynchronous prefetch augmented with appropriate caching mechanism. As Figure~\ref{fig:vram-ssd} shows, this maintains the \textit{effective VRAM footprint} of \tide{} similar as LLaMA-Base-1B level ($1.03$ GB in 8-bit) while the SSD footprint scales from $0$ to $3.152$ GB from $K{=}0$ to $K{=}24$. Additional details regarding inference overhead and \memblock{}s compression techniques can be found in Appendix \ref{app:inference_cost} and \ref{app:compression_tide}.

\subsection{TIDE: Theoretical Perspectives and Observations.}
\label{sec:theory}
 % ── 2.3.1 Asymptotic Generalisation ──────────────────────────────────────────
\subsubsection{Asymptotic Generalization to Standard Transformer.}
\label{sec:generalisation}
 
\begin{proposition}[Asymptotic Generalization]
\label{prop:gen}
Let $\mathcal{F}_{\mathrm{base}}$ denote the function class of standard transformers~\eqref{eq:base} and $\mathcal{F}_{\mathrm{TIDE}}$ the class of our proposed \tide{} models~\eqref{eq:tide_model}. For any $\epsilon > 0$, there exist finite router parameters $W_r^\ell$ such that
\[
  \norm{m^\ell(v)} < \epsilon \qquad \forall\, v \in \V,\;
  \ell \in \{1,\ldots,L\}.
\]
That is, $\mathcal{F}_{\mathrm{TIDE}}$ can approximate the standard transformer $\mathcal{F}_{\mathrm{base}}$ to an arbitrary precision.
\end{proposition}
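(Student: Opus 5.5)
The plan is to drive the router's softmax mass onto the null bank. Fix $\epsilon>0$. Because each $M_k(v)=\RMSNorm(E_k[v])$ for $k\le K$ is an RMS-normalised vector, I first bound its norm uniformly in $v$. With the learnable gain $\gamma_k$, $\norm{M_k(v)}\le \sqrt{d_b}\,\norm{\gamma_k}_\infty=:R_k$, independent of $v$. Set $R:=\max_k R_k$. Using $M_{K+1}\equiv\bzero$ and the triangle inequality,
\[
\norm{m^\ell(v)}=\Bigl\|\sum_{k=1}^{K}\alpha_k^\ell M_k(v)\Bigr\|\le R\sum_{k=1}^{K}\alpha_k^\ell = R\bigl(1-\alpha_{K+1}^\ell\bigr).
\]
It therefore suffices to make $1-\alpha_{K+1}^\ell<\epsilon/R$ at every layer and every input. (If $R=0$ the claim is trivial.)

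Next I control the router logits. Take $W_r^\ell$ with every row zero except the last. A zero row would give logit $0$, but the input $\tilde n^\ell$ varies, and there is no bias term. So I also need the logit gap $z_{K+1}-z_k$ to be large for all admissible inputs, not just on average.

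This is the main obstacle. The cleanest handle is that $\tilde n^\ell=\RMSNorm(\tilde h^\ell)$ has RMS controlled by the gain, so it lies on a bounded set. But a purely linear map can produce large logits of either sign on that set.

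My first attempt exploits the fact that the softmax only sees differences, plus some direction that is always positive. One route is to take $W_r^\ell$ rows $k\le K$ equal to $-\lambda w$ and row $K+1$ equal to $\lambda w$, with $w$ chosen so that $w^\top\tilde n^\ell\ge c>0$ on the relevant inputs. Then
\[
\alpha_k^\ell=\frac{e^{-\lambda w^\top\tilde n}}{K e^{-\lambda w^\top \tilde n}+e^{\lambda w^\top\tilde n}}\le e^{-2\lambda c},
\]
and $1-\alpha^\ell_{K+1}\le K e^{-2\lambda c}<\epsilon/R$ once $\lambda>\tfrac{1}{2c}\log(KR/\epsilon)$, which is a finite choice.

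Obtaining $c>0$ needs some structure: for instance a constant coordinate in the residual stream, which a transformer can implement, or the compactness of the finite set of inputs encountered for a fixed vocabulary and bounded context length. I would argue this via the finite input set. For a fixed parameter setting of the rest of the network, the set of $\tilde n^\ell$ values is finite and excludes $0$, since RMSNorm with nonzero gain never outputs zero. A generic $w$ then has $w^\top\tilde n\neq 0$ on all of them. I would flip the signs coordinate-wise if needed, or equivalently use the absolute-value trick with a two-row construction. If this fails, I would fall back on the standard convention that the router carries a bias, which makes the step immediate.

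Finally, I would transfer the bound to the whole network. With $\norm{m^\ell}<\epsilon$ at every layer, each TIDE block equals the corresponding base block of \eqref{eq:base}, with the same attention and FFN weights, plus a perturbation of size less than $\epsilon$. Since each block is continuous, and Lipschitz on bounded sets, a layer-by-layer induction shows that the TIDE output converges to the base output as $\epsilon\to 0$. This gives the approximation statement.
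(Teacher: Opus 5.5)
Your plan follows the paper's. The null bank contributes nothing, the triangle inequality gives $\norm{m^\ell(v)}\le(1-\alpha^\ell_{K+1})\max_{v,k}\norm{M_k(v)}$, and a finite logit gap of order $\log(KR/\epsilon)$ finishes. You also correctly isolate the step the paper passes over. The paper simply posits router weights with $W_r^\ell\tilde n^\ell\approx s^*\mathbf{e}_{K+1}$, which a bias-free linear router cannot achieve for all inputs at once.

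Your resolution of that step fails, however. A generic $w$ only gives $w^\top\tilde n\neq 0$ on the finite set $S_\ell$ of router inputs; it does not give a common sign. Neither sign-flipping nor a ``two-row absolute-value'' construction can repair this. For any $W_r^\ell$ with rows $w_1,\ldots,w_{K+1}$, each gap $z^\ell_{K+1}-z^\ell_k=(w_{K+1}-w_k)^\top\tilde n^\ell$ is a single linear functional of $\tilde n^\ell$. Making all gaps large on $S_\ell$ is therefore possible if and only if $S_\ell$ lies in an open half-space through the origin, and nothing guarantees that. Suppose $0\in\mathrm{conv}(S_\ell)$, for instance because some $n$ and $-n$ both occur, as is typical for spread-out embeddings with $|\V|\gg d$. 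Then for every $W_r^\ell$ some input has $z^\ell_{K+1}\le z^\ell_1$. With $K=1$ this gives $\alpha^\ell_1\ge\tfrac12$, hence $\norm{m^\ell(v')}\ge\tfrac12\norm{M_1(v')}$ for the token $v'$ at that position. So the statement is actually false under router-only freedom. Moreover, $S_\ell$ depends on the routers at layers below $\ell$, so even a half-space condition would have to be arranged inductively.

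The gap cannot be closed by a cleverer choice of $w$; it needs an extra hypothesis or extra freedom. Your fallback of a router bias works, and with it your argument coincides with the paper's, which tacitly assumes exactly that freedom. Alternatively, you could assume some coordinate of $\tilde h^\ell$ has constant sign on all inputs. RMSNorm with positive gain preserves that sign, so $w=\mathbf{e}_j$ together with finiteness yields $c>0$. If only the function-class conclusion is wanted, you can drop router-only freedom altogether. Taking the gains $\gamma_k=0$ (your $R=0$ case) gives $m^\ell\equiv\bzero$ and exact containment $\mathcal{F}_{\mathrm{base}}\subseteq\mathcal{F}_{\mathrm{TIDE}}$. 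Your final step, propagating a uniform per-layer perturbation through continuous blocks on a finite input set, is sound and goes beyond the paper once such a uniform bound is in hand.
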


\textbf{\textit{Proof sketch.}}
\renewcommand{\qedsymbol}{}
Since $M_{K+1}(v) = \bzero$, any weight assigned to the null bank contributes nothing to $m^\ell(v)$. By the softmax constraint, increasing the null logit $z_{K+1}^\ell$ jointly suppresses all active bank weights: $\sum_{k=1}^{K} \alpha_k^\ell = K/(K + e^{z_{K+1}^\ell}) \to 0$ as $z_{K+1}^\ell \to \infty$.

The triangle inequality then gives $\norm{m^\ell(v)} \leq (1 - \alpha_{K+1}^\ell) \cdot C \to 0$, where $C = \max_{v,k}\norm{M_k(v)} < \infty$. Setting $z_{K+1}^\ell = s^* = \log(K(C-\epsilon)/\epsilon)$ achieves $\norm{m^\ell(v)} < \epsilon$ at a finite parameter configuration. The full proof can be found in Appendix~\ref{app:proof}.

\subsubsection{TIDE's K-Pathway Gradient Amplification.}
\label{sec: rare_token_theory}
In section \ref{sec:grad_starvation} for the standard transformer, we discussed that the embedding $e_v$ of a rare token $v$ receives a non-zero gradient update only in steps where $v$ appears in the batch, yielding an expected cumulative squared gradient norm bounded by $\tau \cdot f_v \cdot BT \cdot G^2$. Our proposed \tide{}'s architecture provides a design advantage of $K$ independent \memblock{}s that enable $K$ distinct, parallel gradient pathways into each token's embedding tables on \emph{every} training step, regardless of how rarely it occurs in the corpus. We formalize the advantage as:

\begin{proposition}[$K$-Pathway Gradient Amplification]
  \label{prop:k_pathway}
   Let $\mathcal{L}_s$ denote the loss at step $s$ and let $e^{(k)}_v \in \R^{d_b}$ be the embedding of token $v$ in \memblock{} $k$. Under minibatch SGD, the total expected cumulative squared gradient norm across all $K$ embedding tables for token $v$ satisfies:
  \begin{equation}
    \E\!\left[\sum_{s=1}^{\tau} \sum_{k=1}^{K} \norm{\nabla_{e^{(k)}_v} \mathcal{L}_s}^2\right] \;\geq\; K \cdot \tau \cdot \kappa_v \cdot G^2_{\min},
    \label{eq:k_amplification}
  \end{equation}
  where $\kappa_v = 1 - (1 - f_v)^{BT} \approx f_v \cdot BT$ for small $f_v$, and $G^2_{\min} > 0$ is a lower bound on the per-step squared gradient norm conditioned on token $v$ appearing in the batch. Consequently, \tide{} provides a $K$-fold amplification of gradient signal relative to the standard single-embedding baseline.
\end{proposition}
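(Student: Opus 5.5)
The plan is to reduce the bound to a per-block, per-step statement and then sum over steps and blocks, mirroring the derivation of the ratio bound~\eqref{eq:grad_ratio} for the standard transformer (Appendix~\ref{app:ratio}). Fix a step $s$ and a block $k$. Let $A_s$ be the event that token $v$ appears at least once among the $BT$ positions of the batch at step $s$.

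First I will handle the case where $v$ is absent. Token positions are drawn independently with marginal probability $f_v$, so $\Pr[A_s] = 1-(1-f_v)^{BT} = \kappa_v$, exactly as in~\eqref{eq:expected_updates}. On the complement $A_s^c$ the row $E_k[v]$ is never looked up. Hence $\nabla_{e_v^{(k)}}\mathcal{L}_s = \bzero$, and we can discard this event since squared norms are nonnegative. This gives
\[
\E\bigl[\|\nabla_{e^{(k)}_v}\mathcal{L}_s\|^2\bigr] \;\ge\; \Pr[A_s]\,\E\bigl[\|\nabla_{e^{(k)}_v}\mathcal{L}_s\|^2 \,\big|\, A_s\bigr] \;\ge\; \kappa_v\, G_{\min}^2,
\]
where the last inequality uses the hypothesis defining $G_{\min}^2$. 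I will read that hypothesis as a lower bound holding for each memory table $k$, i.e.\ conditioned on $A_s$ every block receives squared gradient at least $G_{\min}^2$.

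Next I will sum. Linearity of expectation over $s=1,\dots,\tau$ and $k=1,\dots,K$ gives $K\tau\kappa_v G_{\min}^2$ directly. There is no independence requirement across blocks, because each term is bounded separately. For the comparison with the baseline, I will put this next to~\eqref{eq:grad_starvation}, where the single table $e_v$ has total at most $\tau f_v BT G^2$, of the same order as $\tau\kappa_v G^2$. The bound for \tide{} therefore carries an extra factor $K$. I will also note $\kappa_v \approx f_v BT$ by a first-order expansion for small $f_v$.

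The main obstacle is justifying the per-block lower bound $G_{\min}^2$, since it is not automatic. By the chain rule, $\nabla_{e_v^{(k)}}\mathcal{L} = \sum_{t:x_t=v} J_{\RMSNorm}(E_k[v])^\top \sum_\ell \alpha_k^\ell(t)\, \partial\mathcal{L}/\partial h^\ell_t$, plus router terms. This could vanish if the router sends all its weight to the null bank, or if the RMSNorm Jacobian kills the component along $E_k[v]$. My first step is to state this nondegeneracy explicitly as the assumption the proposition invokes. I will then support it by noting that softmax forces $\alpha_k^\ell>0$ at finite parameters (as used in Proposition~\ref{prop:gen}). That makes each block's gradient a strictly positively weighted sum of the layerwise residual gradients, so a block-independent lower bound is plausible. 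Taking the minimum over $k$ then yields a single $G_{\min}^2$.
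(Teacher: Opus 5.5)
Your proposal is correct and follows essentially the same route as the paper's proof. Both arguments explicitly assume the per-block non-degeneracy bound $\|\nabla_{e^{(k)}_v}\mathcal{L}_s\|^2 \ge G_{\min}^2$ on the event $\{v\in\text{batch}_s\}$, use that the gradient is identically zero when $v$ is absent, and sum over $k$ and $s$ by linearity of expectation to obtain $K\tau\kappa_v G^2_{\min}$. One small point: your ``extra factor $K$'' comparison against the baseline's upper bound $\tau f_v BT G^2$ holds only up to the ratio $G^2/G^2_{\min}$, which the paper handles by explicitly assuming $G^2/G^2_{\min}=O(1)$.
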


\textbf{\textit{Proof sketch.}}
\renewcommand{\qedsymbol}{}
  Each \memblock{} $k$ maintains an independent embedding table $E_k \in \R^{|\mathcal{V}| \times d_b}$ with no parameter sharing across blocks\footnote{For the simplicity, we state the argument for a router over $K$ active banks.}. Within a forward pass during training, \memblock{} $k$'s output $M_k(v)$ is injected into every transformer layer $\ell$ via the routing weight $\alpha^\ell_k$, contributing to the residual stream and thereby to the loss. Since the $K$ blocks are independent, the event $\{v \in \text{batch}_s\}$ triggers gradient flow through all $K$ embedding tables simultaneously. Because router weights are strictly positive for finite logits, each table receives a non-degenerate gradient on every step that $v$ appears. Summing across blocks and applying the lower bound from Appendix~\ref{app:ratio} independently to each yields the $K$-fold amplification. Please see Appendix~\ref{app:k_amplification} for details.

\begin{figure}
    \centering
    \includegraphics[width=0.95\linewidth]{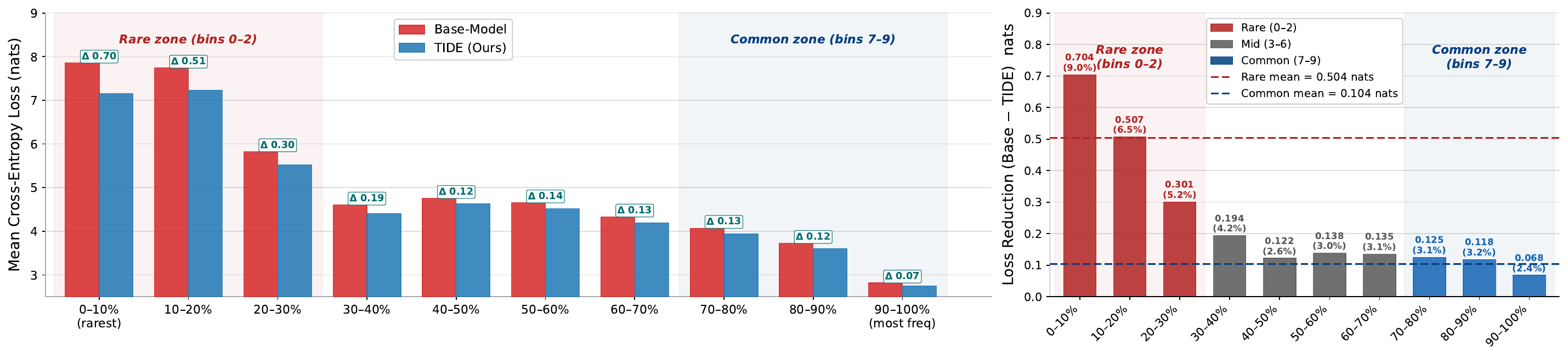}
    
    \caption{Mean validation cross-entropy loss per frequency decile of LLaMa-\texttt{Base}-1B and \tide{}-8E-1B trained with $200$B tokens. \tide{} strictly improves over the baseline on every decile with gain concentrated for \emph{rare} tokens and follows monotonically decreasing trend as \emph{rare} $>$ \emph{mid} $>$ \emph{common}.}
    
    \label{fig:loss_rare_tokens}
\end{figure}

$\bigstar$ \textbf{Empirical Investigation \textit{[Rare Tokens Benefits from \tide{}]}:} Figure \ref{fig:loss_rare_tokens}(a) illustrate the mean cross-entropy of LLaMa-\texttt{Base}-1B and \tide{}-8E-1B at the matched $200$B-token training budget across all $10$ token frequency deciles. Clearly, we can observe that \tide{} \emph{strictly outperforms} LLaMa-\texttt{Base}-1B on every decile, but the absolute performance gap is sharply asymmetric for \emph{rare vs. common} tokens. Per-decile loss reduction in Figure~\ref{fig:loss_rare_tokens}(b) decays \emph{monotonically} from $0.704$ nats ($9.0\%$ relative) on the rarest decile to $0.068$ nats ($2.4\%$) on the most frequent decile, yielding a $\sim$$4.8\times$ disparity in absolute gain between \emph{rare} and \emph{common} mean. This rare-skewed improvement profile is precisely the empirical signature provide support for $K$-fold gradient amplification to assist tokens where base embedding $E$ is gradient starved during training.

\subsubsection{Contextual Collapse and \tide{} $K$-\memblock{}s.}
\label{sec: contextual_collapse_theory}
In a standard transformer, FFN receives $h^{(\ell)}$ as input, and when $\norm{h^{(\ell)}_u - h^{(\ell)}_v} \leq \delta$ is small, Lipschitz continuity forces its outputs to remain close regardless of the weights chosen (see Section \ref{sec:contextual_collapse}). \tide{} architectural design permits to break this constraint since each \memblock{} is indexed by the discrete token identity $v$ unlike $h^{(\ell)}$, so its output carries no continuity obligation with respect to $\delta$. We formalize this observation as: 
\begin{proposition}[Memory Ensemble Resolves Collapsed Token Separation]
\label{prop:zero-collapse}
Let $(u,v)\in\mathcal{C}_\delta^{(\ell)}$ be a collapsed token pair satisfying
$\norm{h^{(\ell)}_u - h^{(\ell)}_v} \leq \delta$, and let $C > 0$ be any
target separation.
For any $K \geq 1$, there exist \embmem{} parameters $\{E_k\}_{k=1}^{K}$
such that:
\begin{equation}
  \norm{M_k(u) - M_k(v)} = C
  \label{eq:separation}
\end{equation}
regardless of $\delta = \norm{h^{(\ell)}_u - h^{(\ell)}_v}$ and independently of $L_{\mathrm{FFN}}$.
\end{proposition}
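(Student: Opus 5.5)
The construction works directly on the embedding tables. The point is that $M_k$ is a function of the discrete index alone, so no continuity in $h^{(\ell)}$ constrains it. I will fix an arbitrary \memblock{} $k$ and choose the two rows $E_k[u]$ and $E_k[v]$ by hand. All other rows, all other tables, and all transformer weights can be left arbitrary. Because $M_k(u)=\RMSNorm(E_k[u])$ and $M_k(v)=\RMSNorm(E_k[v])$ never read $h^{(\ell)}_u$ or $h^{(\ell)}_v$, whatever separation I obtain is automatically independent of $\delta$ and of $L_{\mathrm{FFN}}$. This is the main contrast with Proposition~\ref{prop:ffn_lower_bound}, where the FFN's input was the collapsed hidden state.

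\textbf{Step 1: describe the image of $\RMSNorm$.} I write $\RMSNorm(x)=\gamma\odot x/\mathrm{rms}(x)$ with $\mathrm{rms}(x)=\norm{x}/\sqrt{d_b}$ and a learnable gain $\gamma\in\R^{d_b}$. Let $\hat a=E_k[u]/\mathrm{rms}(E_k[u])$ and $\hat b=E_k[v]/\mathrm{rms}(E_k[v])$. Both lie on the sphere of radius $\sqrt{d_b}$.

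\textbf{Step 2: fix the directions.} Take $E_k[v]=-E_k[u]$ with $E_k[u]\neq\bzero$. Then $\hat b=-\hat a$ and $\norm{\hat a-\hat b}=2\sqrt{d_b}$.

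\textbf{Step 3: tune the scale.} Set the gain to the uniform vector $\gamma=\frac{C}{2\sqrt{d_b}}\mathbf{1}$. This gives
\[
\norm{M_k(u)-M_k(v)}=\frac{C}{2\sqrt{d_b}}\cdot 2\sqrt{d_b}=C,
\]
which is \eqref{eq:separation}.

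\textbf{Step 4: do it in every block.} Repeating the choice for each $k=1,\dots,K$ gives the claim for any $K\ge 1$. The rows for $u$ and $v$ are separate parameters, so no other token's embedding is disturbed. It may also be worth noting what this gives downstream: if all $K$ pairs are aligned in a common direction, the routed vector $m^\ell$ satisfies $\norm{m^\ell(u)-m^\ell(v)}=(1-\alpha^\ell_{K+1})C$, modulo the router weights differing slightly between $u$ and $v$. That remark is optional for the proposition as stated.

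\textbf{Main obstacle.} The delicate point is the normalization. If the gain $\gamma$ is absent or frozen at $\mathbf{1}$, the outputs are confined to the sphere of radius $\sqrt{d_b}$. In that case only separations $C\in(0,2\sqrt{d_b}]$ are reachable.
\begin{itemize}
\item For such $C$, I rotate $E_k[v]$ away from $E_k[u]$ by an angle $\theta\in(0,\pi]$. The distance $2\sqrt{d_b}\sin(\theta/2)$ is continuous and increasing in $\theta$, so by the intermediate value theorem some $\theta$ hits $C$ exactly.
\item For larger $C$, the learnable gain is essential, and I would state that assumption explicitly.
\end{itemize}
Either way, the construction never references $h^{(\ell)}$, which is the content of the ``regardless of $\delta$'' clause.
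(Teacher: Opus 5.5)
Your proof follows the same route as the paper's: $M_k$ reads only the discrete index, and because $E_k[u]$ and $E_k[v]$ are uncoupled rows, they can be set to produce separation $C$ without any reference to $\delta$ or $L_{\mathrm{FFN}}$. Your antipodal-plus-gain construction is more careful than the paper's argument, which just asserts that any $C>0$ is attainable within the range of $\RMSNorm$, and you are right that with the gain absent or frozen the separation is capped (at $2\sqrt{d_b}$ when $\gamma=\mathbf{1}$), so arbitrary $C$ requires treating the gain as a free parameter beyond the listed $\{E_k\}$; note only that this gain is shared by the whole block, so changing it rescales every token's $M_k$, and your remark that no other token is disturbed holds only for the rows.
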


\textbf{\textit{Proof sketch.}}
Each \memblock{} output $M_k(v) = \mathrm{RMSNorm}(E_k[v])$, where $E_k[v]$ is the row of embedding table $E_k$ indexed by the discrete token identity $v$. The hidden state $h^{(\ell)}$ does not appear in this computation, so $M_k(u)$ and $M_k(v)$ depend only on their respective rows $E_k[u]$ and $E_k[v]$. Since these rows are \emph{separate, uncoupled parameters}, they can be assigned freely and independently for any token pair $(u, v)$, regardless of how small $\delta$ is. In particular, one can choose $E_k[u]$ and $E_k[v]$ such that the resulting RMSNorm outputs can achieve any prescribed separation $C > 0$, which satisfies~\eqref{eq:separation}. See Appendix \ref{app:collpase-proof} for the additional details. 

We would like to clarify that \tide{} does not attempt to fight the Lipschitz constraint of the FFN; it routes around it by exploiting a fundamentally different input signal during the training. Because $m^{(\ell)}(v) = \sum_{k=1}^{K}\alpha^{\ell}_k M_k(v)$ is re-injected in additive fashion at every transformer layer via independent per-layer router weights $W^{\ell}_r$, this token-discriminative signal persists throughout the residual stream, it enables effective separation at every layer $\ell$.

\begin{figure}
    \centering
    \includegraphics[width=0.99\linewidth]{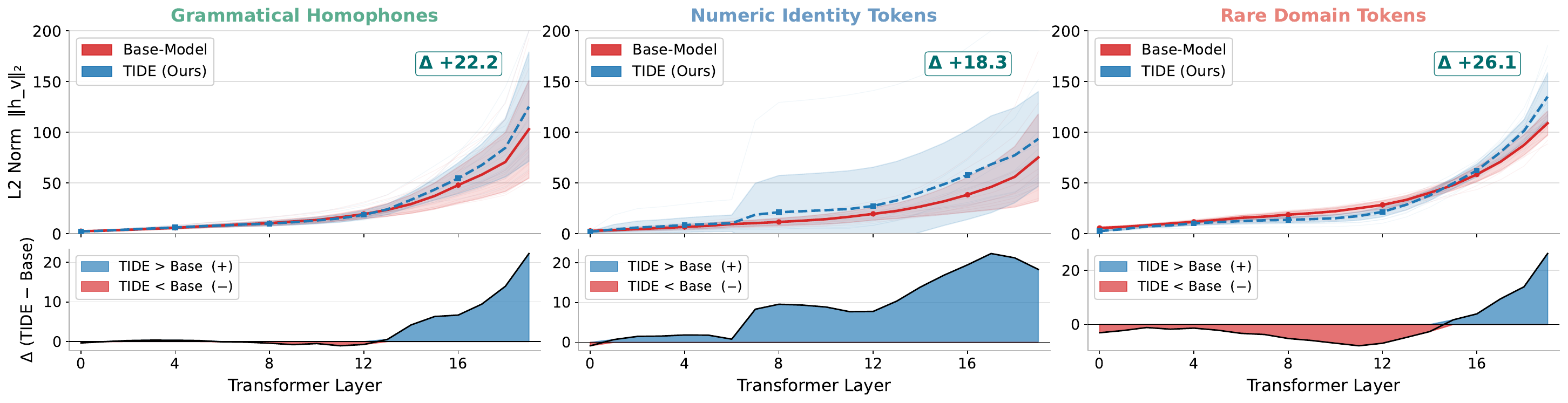}
    
    \caption{Layer-wise $\ell_2$ separation $\|h^{(\ell)}_u - h^{(\ell)}_v\|$ between hidden states of token pairs from the three example contextual collapse categories, averaged across $150$ template sentences.}
    
    \label{fig:collapse-base-tide}
\end{figure}

$\bigstar$ \textbf{ Empirical Investigation \textit{[Contextual Collapse is Moderated by \tide{}]}:} To empirically validate the contribution of additive pathways of \memblock{}s, we revisit the three example contextual collapse categories from Figure \ref{fig:collapse} (grammatical homophones, numeric identity tokens, rare domain tokens) and compare the layer-wise $\ell_2$ separation $\| h^{(\ell)}_u - h^{(\ell)}_v \|$ between LLaMa=\texttt{Base}-1B and \tide{} on the same template sentences. Figure~\ref{fig:collapse-base-tide} (top row) reports the mean $\ell_2$ norm averaged over all sampled token pairs in each category and bottom row reports the per-layer difference $\Delta = \|\cdot\|_{\tide{}} - \|\cdot\|_{\textsc{Base}}$. Across all three categories, we can clearly observe that \tide{}'s token-discriminative signal injection significantly increase in $\ell_2$ separation prominently from middle to terminal layers which are distant from base embedding $E$. Note that \textit{numerical tokens} which suffers acute collapse (Figure \ref{fig:collapse}), are the predominant beneficiary of the token identity injection throughout all layers.

\begin{table}[h]
\centering
\caption{Benchmark results for LLaMA-Base and TIDE variants at 750M, 1B and 3B parameter scales. PPL is LAMBDA perplexity (lower is better); BoolQ and LAMBADA use accuracy; all other columns use normalized accuracy (\%). Best results per scale are \textbf{bolded}.}

\label{tab:benchmark_results}
\resizebox{0.99\textwidth}{!}{%
\begin{tabular}{lc|cccccccc>{\columncolor{lightgray}}c}
\toprule
\textbf{Model} & \textbf{PPL~$\downarrow$} & \textbf{ARC-C} & \textbf{ARC-E} & \textbf{BoolQ} & \textbf{HellaSwag} & \textbf{LAMBADA} & \textbf{OBQA} & \textbf{PIQA} & \textbf{SciQ} & \textbf{Average} \\
\midrule
\multicolumn{11}{l}{\textit{750M Parameters}} \\
\midrule
LLaMA-Base       & 5.63          & 34.6          & 60.4          & \textbf{63.5} & 60.9          & 62.8          & 36.8          & 73.8          & 85.1          & 59.7 \\
TIDE-8E-750M     & \textbf{5.18} & \textbf{36.0} & \textbf{61.4} & 63.0          & \textbf{62.6} & \textbf{64.9} & \textbf{37.2} & \textbf{74.8} & \textbf{85.8} & \textbf{60.7} \\
\midrule
\multicolumn{11}{l}{\textit{1B Parameters}} \\
\midrule
LLaMA-Base       & 5.19          & 37.5          & 64.4          & 61.7          & 63.9          & 64.6          & 37.6          & 74.9          & 86.9          & 61.4 \\
TIDE-2E-1B       & 4.97          & 37.6          & 65.7 & 68.7          & 64.9          & 65.1          & 36.4          & 75.5          & 87.1          & 62.6 \\
TIDE-8E-1B       & 4.89          & 37.5          & 64.5          & 69.3          & 65.3 & 64.7          & \textbf{40.8} & 75.5          & 86.6          & 63.0 \\
TIDE-16E-1B      & 4.78 & 38.7 & 65.5          & \textbf{69.7} & 65.3& 65.7 & 37.8          & 75.9 & \textbf{87.9} & 63.3 \\
TIDE-24E-1B      & \textbf{4.60} & \textbf{38.9} & \textbf{66.3}          & 69.5 & \textbf{66.3} & \textbf{66.4} & 37.2          & \textbf{77.3} & 87.2 & \textbf{63.7} \\
\midrule
\multicolumn{11}{l}{\textit{3B Parameters}} \\
\midrule
LLaMA-Base   & 4.00    & 41.2 & 74.8          & 69.0          & 71.9 & 69.4          & 40.2          & 78.1 & \textbf{93.3} & 67.2 \\
TIDE-8E-3B   & \textbf{3.86}    & \textbf{44.3} & \textbf{75.5} & \textbf{72.3} & \textbf{72.2} & \textbf{70.2} & \textbf{40.6}          & \textbf{78.3}          & 93.2          & \textbf{68.3} \\
\bottomrule
\end{tabular}%
}
\end{table}

\begin{figure}
    \centering
    \includegraphics[width=0.75\linewidth]{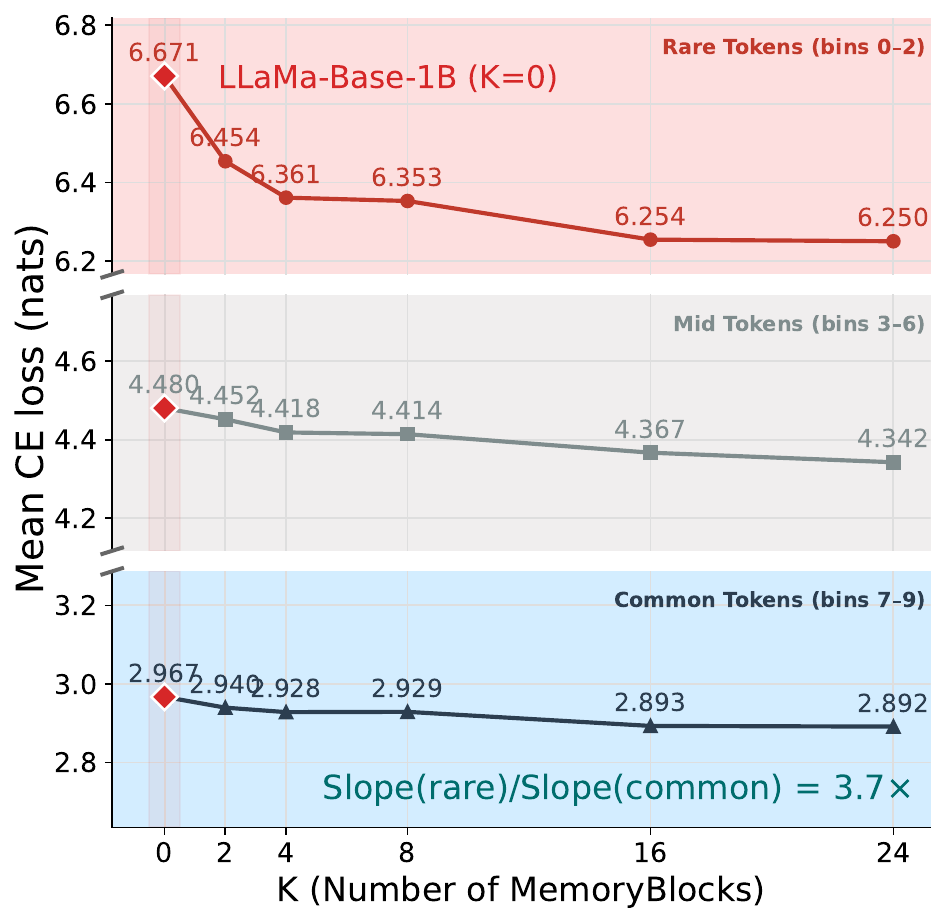}
    \caption{Mean cross entropy loss across \emph{rare, mid,} and \emph{common} tokens with increasing $K$ \memblock{}s.}
    \label{fig:catagoryloss}
\end{figure}
\section{Experiments and Ablation Studies}

\subsection{Performance Benchmarking of TIDE and Standard Transformer.}
\label{sec:performance_based_experiments}

\textbf{\ding{226} Perplexity and Training Dynamics:} \tide{} introduces a parallel additive \embmem{} pathways within conventional transformers to address the challenges associated with rare tokens and contextual collapse (Section \ref{sec: contextual_collapse_theory}, \ref{sec: rare_token_theory}). Here, we first investigate the influence of token-indexed memory's ability to improve the language modeling quality of standard transformers. Figure \ref{fig:ppl_based_tide_evaluation} presents the validation perplexity on three datasets - Wikitext \citep{merity2016pointer}, PubMed \citep{jin2019pubmedqa} and DCLM \citep{li2024datacomp} held-out corpora as a function of total training tokens for LLaMa-\texttt{Base}-1B and \tide{}-1B with $K \in \{2, 4, 8, 16, 24\}$. Clearly, each \tide{} variant \textit{strictly outperforms} LLaMa-\texttt{Base}-1B monotonically from $K=2$ to $K=24$ without saturation. Performance gap opens early during training where with 100B tokens \tide{} with merely $2-4$ \memblock{}s already matches the perplexity baseline reaches with 200B tokens, indicating that the additional gradient pathways translate to faster effective convergence.
% <Focus Figure 8>

\begin{figure}
    \centering
    \includegraphics[width=0.99\linewidth]{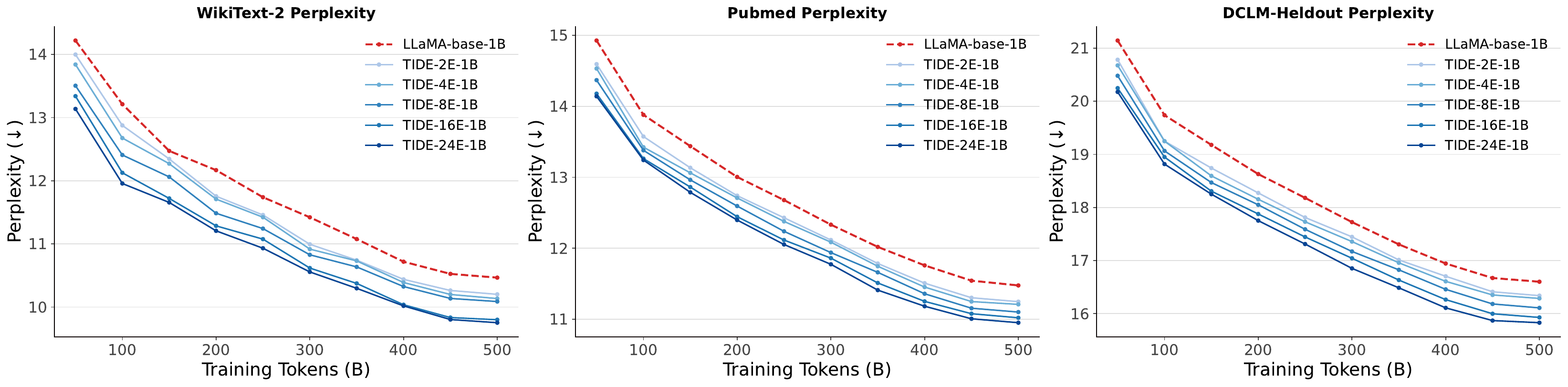}

    \caption{Wikitext-2, PubMed, DCLM validation PPL as a function of training tokens indicating monotonic improvement across increasing $K$ blocks of \tide{} variants.}

    \label{fig:ppl_based_tide_evaluation}
\end{figure}
\textbf{\ding{226} Influence of $K$ across across \emph{Rare, Mid, and Common} tokens:} While perplexity based evaluation provide an overall performance benefit of \tide{}, a natural question arises from Proposition \ref{prop:k_pathway} as: \textit{Do these $K$ \memblock{} pathways empirically benefits rare and common tokens equally, and does the marginal benefit of additional $K$ scale with token frequency?} Figure \ref{fig:catagoryloss} decomposes held-out cross-entropy loss across \emph{rare, mid, and common} bins as a function of increasing $K$. It can be observed that the absolute loss reduction over LLaMA-\texttt{Base}-1B is largest on rare tokens, moving from $K=0$ to $K=24$ reduces rare-bin loss from 6.671 to 6.250 nats ($-$0.421) compared to only $-$0.075 nats on common bin which is a 5.6$\times$ difference in absolute gain. In addition, the per-block marginal benefit (the slope of each curve) is 3.7$\times$ steeper on rare tokens than on common tokens, illustrating the alignment with  Section \ref{sec: rare_token_theory}. Note that even the smallest configuration $(K=2)$ can deliver $\sim$55\% of the total rare-token improvement obtained at $K=24$ (also reflected in PPL in Figure \ref{fig:ppl_based_tide_evaluation}), suggesting that the bulk of the benefit can be achieved with modest 2-4 memory blocks. 

\ding{226} \textbf{TIDE and Downstream Task Performance:} Table \ref{tab:benchmark_results} reports zero-shot accuracy on a suite of eight benchmarks (ARC-C, ARC-E, BoolQ, HellaSwag, LAMBADA, OBQA, PIQA, SciQ) across 750M, 1B, and 3B parameter scales of LLaMa-\texttt{Base} and \tide{} variants. Across all  settings, \tide{} variants consistently outperform standard transformer baselines, confirming the robustness of our proposed architecture. More specifically, at the 1B scale, \tide{} improves the average score from 61.4 (\texttt{Base}) to 63.7 ($K=24$), a $+$2.3\% absolute gain, with monotonic improvement in $K$ on the perplexity column and on six of eight downstream tasks.

\subsection{A Deeper Investigation of \memblock{}s and the NULL Bank.}
The performance results in Section \ref{sec:performance_based_experiments} establish that \tide{} $K$-pathways provide informative signal beyond the contextual residual stream. We now turn the investigation inward to understand the information stored across \memblock{}s and per-layer router dynamics after training.

\begin{figure}
    \centering
  
    \includegraphics[width=0.6\linewidth]{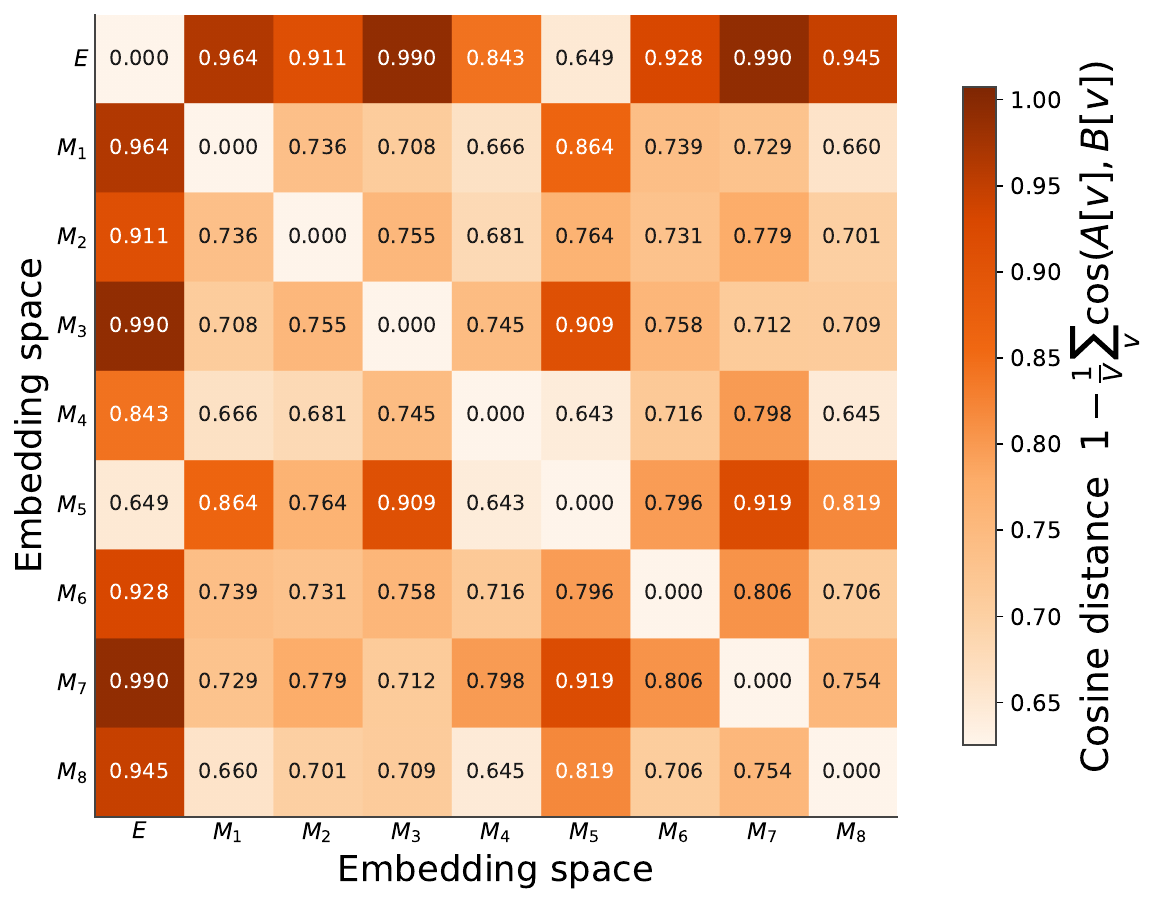}

    \caption{Mean cosine distance between primary embedding $E$ and 8 \memblock{}s.}
 
    \label{fig:embedding_distance}
\end{figure}
\textbf{\ding{226} Distance between Primary Embedding and \memblock{}s:} We first aim to investigate if \tide{}'s $K$-\memblock{}s converge to substantially distinct subspaces or collapse to replicate the base embedding $E$. Figure \ref{fig:embedding_distance} (a) reports mean cosine distance ($1 - \tfrac{1}{|\mathcal{V}|}\sum_v \cos(A[v], B[v])$ between every pair of embedding tables in \tide{}-8E-1B. We make two key observations: (a) every $M_k$ without any explicit diversity loss, is highly distant from $E$ (mean cosine distance to $E$ ranges from 0.65 to 0.99), confirming \memblock{}s do not replicate the input-embedding subspace but encode complementary token-identity signal; (b) inter-$M_k$ distance is relatively smaller, indicating convergence of $K$ blocks to overlapping but non-collapsed subspaces.

\textbf{\ding{226} Bin-wise Router Statistics for \memblock{}s and the NULL Bank:} In reference to our Proposition \ref{prop:gen} which state that TIDE asymptotically generalizes the standard transformer through the NULL bank, an empirical question persists: \textit{How does the router actually utilize this NULL bank, and does it do so in a token-aware manner?} Figure \ref{fig:catagoryloss_router_statistics} reports the mean routing weight $\bar\alpha_k$ allocated to each memory block $M_k$ and to the NULL bank from last layer stratified by frequency bin. We highlight two key observations: (a) the NULL-bank weight is \emph{monotonically non-decreasing in token frequency}: it rises from $\bar\alpha_{\text{NULL}} = 0.530$ on the rarest decile to $0.889$ on the most common decile. The router has therefore learned to open the gate and admit substantial memory-bank mass ($1 - \bar\alpha_{\text{NULL}} \approx 0.47$) for rarest tokens in comparison to common tokens; (b) the router weight is non-uniform across blocks where $M_5$ carries an outsized share on rare tokens ($\bar\alpha_5 \approx 0.31$) before collapsing to near-zero on common tokens wile $M_4$ specializes for mid-decile tokens illustrating that distinct banks specialize to distinct frequency regimes \emph{rather than redundantly co-firing}.

\begin{figure}
    \centering
    \begin{subfigure}[b]{0.48\linewidth}
        \centering
        \includegraphics[width=\linewidth]{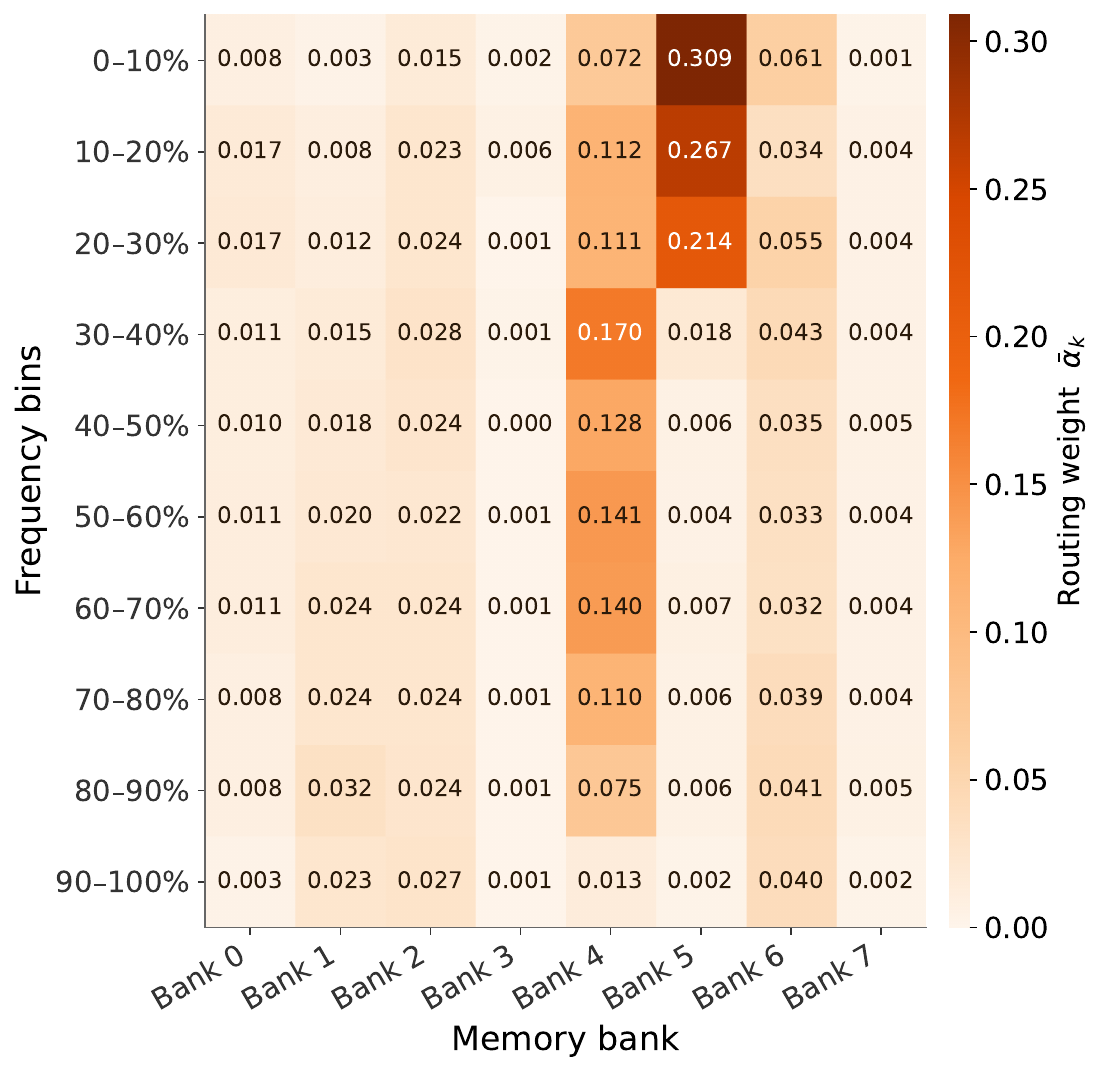}
        \label{fig:router_heatmap}
    \end{subfigure}
    \hfill
    \begin{subfigure}[b]{0.48\linewidth}
        \centering
        \includegraphics[width=\linewidth]{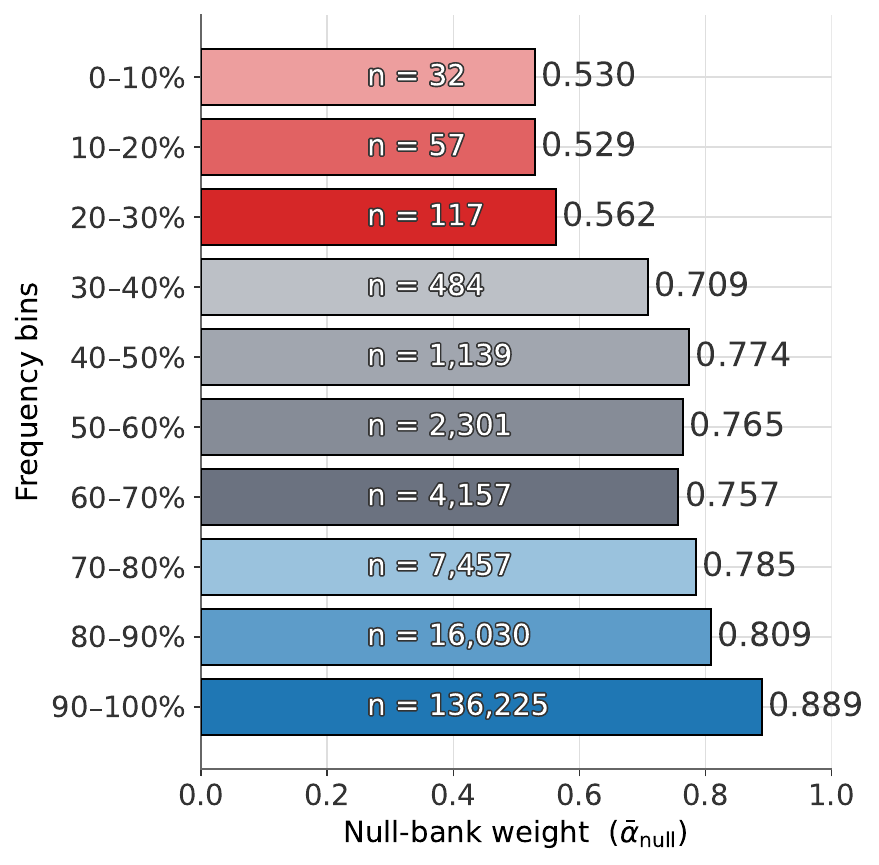}
        \label{fig:router_null}
    \end{subfigure}
    \caption{Bin-wise mean router weights $\bar\alpha_k$ across \memblock{}s (left) and the NULL bank (right), stratified by token frequency decile.}
    \label{fig:catagoryloss_router_statistics}
\end{figure}

\section{Conclusion}
\vspace{-2mm}
In this work, we propose \tide{}, a transformer architecture that addresses two empirically established failure modes of standard LLMs: gradient starvation of rare tokens and contextual collapse of semantically distinct tokens. We introduce \textsc{EmbeddingMemory}, an ensemble of $K$ independent \MB{}s that map token indices directly to semantic vectors, injected at every layer via a depth-conditioned router with a NULL bank. \tide{} provides each transformer layer with a persistent, token-specific signal that is immune to contextual collapse by construction. We \emph{theoretically} and \emph{empirically} establish the benefits of \tide{} in addressing the issues associated with single-token identity injection. With extensive experiments across different model scale, we found \tide{} consistently \emph{improve performance} across multiple language modeling and downstream tasks.

\newpage

\bibliography{references}
\bibliographystyle{plainnat}

%%%%%%%%%%%%%%%%%%%%%%%%%%%%%%%%%%%%%%%%%%%%%%%%%%%%%%%%%%%%%%%%%%%%%%%%%%%%%%%
%%%%%%%%%%%%%%%%%%%%%%%%%%%%%%%%%%%%%%%%%%%%%%%%%%%%%%%%%%%%%%%%%%%%%%%%%%%%%%%
% APPENDIX
%%%%%%%%%%%%%%%%%%%%%%%%%%%%%%%%%%%%%%%%%%%%%%%%%%%%%%%%%%%%%%%%%%%%%%%%%%%%%%%
%%%%%%%%%%%%%%%%%%%%%%%%%%%%%%%%%%%%%%%%%%%%%%%%%%%%%%%%%%%%%%%%%%%%%%%%%%%%%%%
\newpage
\appendix

\section{Background Work}

\subsection{Memory Augmented Architectures.}
Memory-augmented models are designed to expand a model’s effective parameter space without incurring large computational overhead. Early work on memory networks was introduced by \cite{weston2014memory}, and later extended with fully end-to-end trainable variants with \cite{sukhbaatar2015end}. Neural Turing Machines \citep{graves2014neural,Graves2016HybridCU} incorporate an external, trainable memory that works alongside other neural components to simulate a differentiable, trainable computing system. Product-key networks \citep{lample2019large} improve the efficiency and scalability of memory retrieval and propose a key-value memory layer that can scale to very large sizes while keeping
exact search on the key space. More recently, PEER \citep{he2024mixture} has advanced these ideas by replacing traditional vector-based memory values with rank-one matrices, linking memory-augmented architectures with mixture-of-experts models.

Accurate factual generation remains a critical objective for generative models, often evaluated using open-domain question answering benchmarks \citep{chen2017reading,chen-yih-2020-open} and other tasks requiring substantial knowledge \citep{petroni2021kilt}. Models that can effectively encode factual knowledge from training data are better equipped to provide correct responses to knowledge-intensive queries. While larger models generally demonstrate improved factual accuracy \citep{roberts2020much,brown2020language}, hallucination remains a persistent challenge. One effective approach for mitigating this issue is retrieval-augmented generation, which leverages external knowledge sources to improve factual consistency \citep{lewis2020retrieval,karpukhin2020dense,khandelwal2019generalization}. Several language models have incorporated text retrieval from the pretraining stage. REALM \citep{guu2020retrieval} augments a BERT model with one retrieval step to solve QA tasks. Retro \citep{borgeaud2022improving} enhances auto-regressive decoding with multiple rounds of retrieval, once per 64 tokens. The retrieved texts are injected through a two-layer encoder and then several cross-attention layers in the decoder. Retro++ \citep{wang2023shall} explores the scalability of Retro by reproducing Retro up to 9.5B parameters. Meanwhile, several models are adapted to retrieval in the finetuning stage. WebGPT \citep{nakano2021webgpt} learns to use search engine through imitation learning in a text-based web-browsing environment. Toolformer \citep{schick2023toolformer} performs decoding with multiple tools including search engine, and the finetuning data is labeled by the language model itself.

\subsection{Understanding Feed-Forward Networks in Transformers.}

Several studies have investigated the role of feed-forward networks (FFNs) in transformers, particularly their contribution to storing and retrieving knowledge learned during pretraining. \cite{geva2021transformer} demonstrated that FFNs can be interpreted as key–value memories that activate on specific lexical or semantic patterns, while follow-up work showed that FFNs promote vocabulary-level concepts during prediction \citep{geva2022transformer2}. Additional related analyses in embedding space further explored how FFN activations correspond to linguistic features and factual recall \citep{dar2023analyzing,nichani2024understanding}. Within their framework, the first layer acts as a pattern detector (``keys") while the second layer projects specific information into the residual stream (``values"). This modularity is evidenced by the identification of specific ``knowledge neurons" responsible for storing distinct facts. More broadly, the interpretation of neural networks as associative or persistent memory systems connects this line of work to earlier memory-augmented architectures \citep{sukhbaatar2019augmenting}. However, these analyses rely on contextualized residual activations and require extensive post-hoc mining of calibration data, making the inferred query space indirect and difficult to interpret. Furthermore, since FFNs operate exclusively on its contextualized residual stream, their ability to distinguish tokens is mathematically bottlenecked when distinct token appears in identical syntactic context which leads to the contextual collapse problem. Recently, MoLE \citep{jie2025mixture}, illustrates that in mixture-of-experts (MoEs), majority of experts can be trained directly with token-level input embeddings. Following the static routing concept, MemoryLLM \citep{jaiswal2026memoryllm} completely decouples FFNs from the contextual residual stream by directly training a layer-local and token-indexed embedding table to enhance interpretability and reduce compute. Concurrently, in the STEM \citep{sadhukhan2026stem} architecture, FFN is partially replaced to embedding table, with the substitution occurring at the up-projection layer. TIDE builds upon this token-level intuition but fundamentally diverges from standard FFNs. Instead of relying on contextual mixtures vulnerable to collapse, TIDE bypasses this entirely by injecting a context-free token identity directly into residual stream at every depth. 

\subsection{Advancements with Embedding and Modern LLMs.}

Standard transformer models rely on a single-injection assumption where token embeddings are looked up once at the input layer and subsequently faded out. Since language vocabularies strictly obey Zipf's law \citep{zipf1949human, pilgrim2021bias}, majority of tokens infrequently appear in the training corpus. Sub-word tokenization \citep{sennrich2016bpe} is introduced to mitigate out-of-vocabulary issue, yet they do not resolve the fundamental long-tail distribution of tokens, which continues to degrade the performance of contextualized embeddings on rare words. Under standard stochastic gradient descent, this skewed distribution leads to gradient starvation for rare tokens. Embedding sharing \citep{inan2017tying,ofir2017tying} attempt to stabilize training of embedding by tying input and output embedding weights, allowing input representations to benefit directly from the richer gradient signal of the pre-softmax layer. However, simply sharing parameters between the input and output layers does not structurally resolve the gradient starvation on low-frequency tokens. TIDE directly solve this gradient starvation bottleneck by utilizing independent memory blocks which lead to amplification of gradient signals to token representations, disproportionately benefiting rare tokens.

\section{Details of Frequency Bins Generated from Vocabulary}
\label{app:binning}
 
The WikiText-103 training split is tokenized with the LLaMA-3 tokenizer ($|\mathcal{V}| = 128{,}256$, sequence length $T = 2{,}048$), producing a token stream of $\sim$120M tokens over $\sim$58k sequences, of which 65,569 vocabulary entries appear at least once. Raw occurrence counts are then passed through a structural filter that removes BOS/EOS special tokens, pure-whitespace tokens, and non-alphanumeric punctuation. In total, 28 tokens ($0.04\%$ of observed types) are removed, leaving 65,541 token types for the binning procedure.

\begin{table}[h]
\centering
\caption{%
  Frequency decile bin reference for WikiText-103 with the LLaMA-3 BPE
  tokenizer (128K vocabulary) after structural filtering.
  Each bin contains ${\approx}6{,}554$ token types ranked by corpus frequency.
  Representative example tokens are drawn from each tier for semantic illustration.%
}
\label{tab:bin_reference}
\setlength{\tabcolsep}{5pt}
\renewcommand{\arraystretch}{1.22}
\begin{tabular}{clrll}
\toprule
\textbf{Bin} &
\textbf{Freq.\ range} &
\textbf{Types} &
\textbf{Description} &
\textbf{Role} \\
\midrule
\rowcolor{rarerow}
0 & 1--2       & 6,555 & Hapax \& near-hapax tokens    & \textcolor{rarecol}{\textbf{Rare}} \\
\rowcolor{rarerow}
1 & 2--6       & 6,554 & Domain-specific, rare names   & \textcolor{rarecol}{\textbf{Rare}} \\
\rowcolor{rarerow}
2 & 6--20      & 6,554 & Uncommon words, rare entities & \textcolor{rarecol}{\textbf{Rare}} \\
\midrule
\rowcolor{midrow}
3 & 20--61     & 6,554 & Infrequent content words      & \textcolor{midcol}{\textbf{Mid}} \\
\rowcolor{midrow}
4 & 61--133    & 6,554 & Occasional content words      & \textcolor{midcol}{\textbf{Mid}} \\
\rowcolor{midrow}
5 & 133--240   & 6,554 & Moderate-frequency words      & \textcolor{midcol}{\textbf{Mid}} \\
\rowcolor{midrow}
6 & 240--416   & 6,554 & Fairly common content words   & \textcolor{midcol}{\textbf{Mid}} \\
\midrule
\rowcolor{commonrow}
7 & 416--769        & 6,554 & Common function + content words & \textcolor{commoncol}{\textbf{Common}} \\
\rowcolor{commonrow}
8 & 769--1,856      & 6,554 & High-frequency content words    & \textcolor{commoncol}{\textbf{Common}} \\
\rowcolor{commonrow}
9 & 1,856--999,999  & 6,554 & Highest-frequency (below cap)   & \textcolor{commoncol}{\textbf{Common}} \\
\midrule
\rowcolor{rarerow}
\multicolumn{2}{l}{\textit{Rare examples (Bins 0--2)}} & \multicolumn{3}{l}{%
  \texttt{cefuroxime},\ \texttt{morgan},\ \texttt{Produto},\ \texttt{Teotihuacan},\
  \texttt{toujours}}  \\[1pt]
\rowcolor{midrow}
\multicolumn{2}{l}{\textit{Mid examples (Bins 3--6)}} & \multicolumn{3}{l}{%
  \texttt{volcano},\ \texttt{diocese},\ \texttt{battalion},\
  \texttt{peninsula},\ \texttt{sculptor},\ \texttt{harbour}} \\[1pt]
\rowcolor{commonrow}
\multicolumn{2}{l}{\textit{Common examples (Bins 7--9)}} & \multicolumn{3}{l}{%
  \texttt{there},\ \texttt{their},\ \texttt{also},\
  \texttt{however},\ \texttt{first},\ \texttt{time}} \\
\bottomrule
\end{tabular}
\end{table}

\paragraph{Decile assignment of Token Types:} The 65,541 cleaned types are sorted by ascending corpus frequency and partitioned into $B = 10$ equal-cardinality bins, each containing ${\approx}6{,}554$ types, with bin index assigned as:
\begin{equation}
  b(v) \;=\; \min\!\left(
    \left\lfloor
      \frac{\mathrm{rank}(v)}{|\mathcal{V}_{\mathrm{clean}}|} \cdot B
    \right\rfloor,\;
    B - 1
  \right).
  \label{eq:bin_assignment}
\end{equation}

Bin assignment is determined by rank alone while the absolute frequencies establish the ordering. Crucially, while every bin contains the same number of token \emph{types}, the bins account for vastly different shares of the training token \emph{stream} under Zipf's law. Throughout this paper, Bins $\{0-2\}$ are referred to as \underline{\textbf{rare}}, Bins $\{3-6\}$ as \underline{\textbf{mid-frequency}}, and Bins $\{7-9\}$ as \underline{\textbf{common}} tokens.

% \begin{wrapfigure}{r}{0.39\linewidth}
%     \centering
%     \vspace{-1.0em}
%     \includegraphics[width=\linewidth]{figures/rare_embedding_analysis_plot.pdf}
%     \vspace{-1.5em}
%     \caption{Cosine-Nearest Agreement between $E$ and memoryblocks $M_k$ for rare \& common tokens.}
%     \vspace{-1.0em}
%     \label{fig:jaccard_rare_common}
% \end{wrapfigure}

\section{Gradient Starvation Bound: Derivation of Equation~(\ref{eq:grad_starvation})}
\label{app:grad_starvation_proof}
The loss $\mathcal{L}_s$ depends on $e_v$ only through positions in $\mathrm{batch}_s$ that equal $v$.  If $v \notin \mathrm{batch}_s$, then $\partial \mathcal{L}_s / \partial e_v = \mathbf{0}$ exactly. Formally, we can write as:
\begin{equation}
  \nabla_{e_v}\mathcal{L}_s = \mathbf{0}
  \quad \text{whenever } v \notin \mathrm{batch}_s.
  \label{eq:sparsity}
\end{equation}
 
Let's define $X_s := \mathbf{1}[v \in \mathrm{batch}_s]$.  Since each of the $BT$ positions is drawn i.i.d.\ from $\{f_v\}$, the event $\{v \notin \mathrm{batch}_s\}$ requires all $BT$ draws to avoid $v$, each with probability $(1 - f_v)$.  Therefore:
\begin{equation}
  \Pr[v \in \mathrm{batch}_s]
  \;=\; \E[X_s]
  \;=\; 1 - (1-f_v)^{BT}
  \;\leq\; f_v \cdot B \cdot T,
  \label{eq:bernoulli}
\end{equation}
where the inequality applies the Bernoulli bound $(1 - f_v)^{BT} \geq 1 - f_v BT$, valid for $f_v \in [0,1]$.

Next, we bound the cumulative squared gradient using equation ~\eqref{eq:sparsity}, as $\norm{\nabla_{e_v}\mathcal{L}_s}^2 \leq G^2
\cdot X_s$.  Taking expectations and summing over $\tau$ steps:
\begin{align}
  \E\!\left[\sum_{s=1}^{\tau}\norm{\nabla_{e_v}\mathcal{L}_s}^2\right]
  &\leq \sum_{s=1}^{\tau} G^2 \cdot \E[X_s]
  \;\leq\; \tau \cdot f_v \cdot B \cdot T \cdot G^2,
\end{align}
which completes the derivation of ~\eqref{eq:grad_starvation}. 
 
\subsection{Understanding the Ratio of Gradient for Rare and Common Tokens}
\label{app:ratio}
Let $v$ be a rare token with $f_v = \varepsilon \ll 1/(BT)$ and $u$ a common token with $f_u \geq c > 0$. Using the aforementioned derivation, we have
 
\begin{equation}
  \E\!\left[\sum_{s=1}^{\tau}
    \|\nabla_{e_{v}}\cL_{s}\|^{2}\right]
  \;\leq\;
  \tau\cdot\varepsilon\cdot BT\cdot G^{2}.
  \label{eq:num_upper}
\end{equation}
 
To determine the lower bound for common frequency tokens, we assume a standard non-degeneracy condition: whenever token $u$ appears in batch~$s$, the per-step squared gradient norm satisfies $\|\nabla_{e_{u}}\cL_{s}\|^{2}\geq G_{\min}^{2}>0$ on the event $\{u\in\mathrm{batch}_s\}$. This holds throughout training whenever the cross-entropy loss has not been minimized on token $u$.
 
Defining $X_s^{(u)}:=\mathbf{1}[u\in\mathrm{batch}_s]$
and using $\|\nabla_{e_{u}}\cL_s\|^2\geq G_{\min}^2\cdot
X_s^{(u)}$, we take expectations and sum over $\tau$ steps:
\begin{equation}
  \E\!\left[\sum_{s=1}^{\tau}
    \|\nabla_{e_{u}}\cL_{s}\|^{2}\right]
  \;\geq\;
  \tau\cdot\Pr[u\in\mathrm{batch}]\cdot G_{\min}^{2}
  \;=\;
  \tau\bigl(1-(1-f_{u})^{BT}\bigr)G_{\min}^{2}.
  \label{eq:denom_exact}
\end{equation}
Since $f_{u}\geq c>0$ and $1-(1-x)^{n}$ is non-decreasing in $x$:
\begin{equation}
  1-(1-f_{u})^{BT}
  \;\geq\;
  1-(1-c)^{BT}
  \;=:\;\kappa\;>\;0,
  \label{eq:kappa_def}
\end{equation}
where $\kappa$ is a strictly positive constant depending only on $c$, $B$, $T$. Substituting it into \eqref{eq:denom_exact} gives:
\begin{equation}
  \E\!\left[\sum_{s=1}^{\tau}
    \|\nabla_{e_{u}}\cL_{s}\|^{2}\right]
  \;\geq\;
  \tau\,\kappa\,G_{\min}^{2}.
  \label{eq:denom_lower}
\end{equation}
 
To determine the ratio between rare and common tokens, we divide \eqref{eq:num_upper} by \eqref{eq:denom_lower}:
\begin{equation}
  \frac{\E\!\left[\displaystyle\sum_{s}
        \|\nabla_{e_{v}}\cL_{s}\|^{2}\right]}%
       {\E\!\left[\displaystyle\sum_{s}
        \|\nabla_{e_{u}}\cL_{s}\|^{2}\right]}
  \;\leq\;
  \frac{\tau\,\varepsilon\,BT\,G^{2}}%
       {\tau\,\kappa\,G_{\min}^{2}}
  \;=\;
  \frac{BT\cdot G^{2}}{\kappa\cdot G_{\min}^{2}}
  \cdot\,\varepsilon
  \;
  \label{eq:ratio_final}
\end{equation}
Since $BT$, $G^{2}$, and $G_{\min}^{2}$ are fixed positive
constants. By the first-order Taylor expansion for small $c$, we have:
\begin{equation}
  \kappa \;=\; 1-(1-c)^{BT} \;\approx\; c\cdot BT
  \;,
  \label{eq:kappa_order}
\end{equation}
so the prefactor satisfies:
\begin{equation}
  \frac{BT\cdot G^{2}}{\kappa\cdot G_{\min}^{2}}
  \;=\;
  \frac{BT\cdot G^{2}}{(c\cdot BT)\cdot G_{\min}^{2}}
  \;=\;
  \frac{G^{2}}{c\cdot G_{\min}^{2}}
  \;=\; O\!\left(\frac{1}{c}\right),
\end{equation}
and the full bound on the ratio is $O(\varepsilon/c)$, which completes the proof of~\eqref{eq:grad_ratio}.
 
\paragraph{Concrete evaluation on WikiText-103.}
With $\varepsilon=8.3\times10^{-9}$ (Bin-0 rare tokens, $n_v=1$),
$c=8.3\times10^{-3}$ (Bin-9 common token, $n_u\approx10^{6}$),
$B=8$, $T=2048$:
\begin{align}
  \kappa
    &\;=\; 1-(1-c)^{BT}
     \;\approx\; 1-e^{-c\,BT}
     \;=\; 1-e^{-136}
     \;\approx\; 1, \label{eq:kappa_concrete}\\[4pt]
  \frac{\varepsilon}{c}
    &\;=\; \frac{8.3\times10^{-9}}{8.3\times10^{-3}}
     \;=\; 10^{-6}. \label{eq:ratio_concrete}
\end{align}
Under the conservative assumption $G^{2}/G_{\min}^{2}=10$, the gradient signal accumulated by a Bin-0 hapax embedding is bounded above by $10^{-5}$ times that of a Bin-9 common token over the same training run, there exists a gradient disparity of five orders of magnitude.
 
% Using the aforementioned derivation for rare token $v$ with $f_v = \epsilon \ll 1/(BT)$
% and common token $u$ with $f_u \geq c > 0$, and taking the ratio:
% \begin{equation}
%   \frac{\E\bigl[\sum_s \norm{\nabla_{e_v}\mathcal{L}_s}^2\bigr]}
%        {\E\bigl[\sum_s \norm{\nabla_{e_u}\mathcal{L}_s}^2\bigr]}
%   \;\leq\; \frac{\tau \cdot f_v \cdot BT \cdot G^2}
%                {\tau \cdot f_u \cdot BT \cdot G^2}
%   \;=\; \frac{f_v}{f_u}
%   \;=\; \frac{\Pr[\text{position} = v]}{\Pr[\text{position} = u]}
%   \;\longrightarrow\; 0,
% \end{equation}
% as $\epsilon / c \to 0$, establishing~\eqref{eq:grad_ratio} illustrating gradient starvation problem for rare tokens. 

\section{Full Proof of Proposition~\ref{prop:ffn_lower_bound}}
\label{app:ffn_lower_bound_proof}
 
\textit{We prove that for any collapsed pair $(u,v)\in\mathcal{C}_\delta^{(\ell)}$ and any target function $g:\mathcal{V}\to\mathbb{R}^d$ with $\|g(u)-g(v)\| = C > L_{\mathrm{FFN}}\delta$, no setting of the FFN weights can approximate $g$ to error less than $(C-L_{\mathrm{FFN}}\delta)/2$ on both tokens simultaneously.}
 
By definition of the contextual collapse set, $(u,v)\in\mathcal{C}_\delta^{(\ell)}$ implies $\|h_u - h_v\|\le\delta$.  Since FFNs are Lipschitz in its hidden-state input \citep{virmaux2018lipschitz}, we have
\begin{equation}
  \|\mathrm{FFN}(h_u) - \mathrm{FFN}(h_v)\| \;\le\; L_{\mathrm{FFN}}\,\delta.
  \label{eq:lipschitz_bound}
\end{equation}
The FFN outputs for $u$ and $v$ must therefore lie within a ball of radius
$L_{\mathrm{FFN}}\delta$ of each other — they cannot be far apart, regardless
of how the weights $W_1, W_2$ are chosen.

Using triangle inequality on the target separation, we can write the target separation as:
\begin{align}
  C &= \|g(u) - g(v)\| \nonumber\\
    &= \|g(u) - \mathrm{FFN}(h_u)
       + \mathrm{FFN}(h_u) - \mathrm{FFN}(h_v)
       + \mathrm{FFN}(h_v) - g(v)\| \nonumber\\
    &\le \|g(u) - \mathrm{FFN}(h_u)\|
       + \|\mathrm{FFN}(h_u) - \mathrm{FFN}(h_v)\|
       + \|\mathrm{FFN}(h_v) - g(v)\|.
  \label{eq:triangle}
\end{align}
 
By substituting Lipschitz bound \eqref{eq:lipschitz_bound} into \eqref{eq:triangle}:
\begin{equation}
  C \;\le\;
  \|g(u) - \mathrm{FFN}(h_u)\|
  + L_{\mathrm{FFN}}\,\delta
  + \|g(v) - \mathrm{FFN}(h_v)\|.
  \label{eq:substituted}
\end{equation}
 
% \paragraph{Step 4: Isolate the approximation errors.}
Rearranging \eqref{eq:substituted} to isolate error terms:
\begin{equation}
  \|g(u) - \mathrm{FFN}(h_u)\| + \|g(v) - \mathrm{FFN}(h_v)\|
  \;\ge\; C - L_{\mathrm{FFN}}\,\delta.
  \label{eq:sum_error}
\end{equation}
The left-hand side is the sum of two non-negative approximation errors.  When
$C > L_{\mathrm{FFN}}\delta$, the right-hand side is strictly positive, so
at least one of the two errors is positive.  Specifically, since the
maximum of two non-negative numbers is at least half their sum:
\begin{equation}
  \max\bigl\{
    \|g(u) - \mathrm{FFN}(h_u)\|,\;
    \|g(v) - \mathrm{FFN}(h_v)\|
  \bigr\}
  \;\ge\; \frac{C - L_{\mathrm{FFN}}\,\delta}{2}.
  \label{eq:max_error}
\end{equation}
when $C > L_{\mathrm{FFN}}\,\delta$, the right side is strictly positive, which completes the proof.

Three quantities govern the bound among which none of which are under the FFN's control:
\begin{itemize}
  \item  \textbf{Target separation} ($C$): The required difference $\|g(u) - g(v)\|$ between the optimal representations of tokens $u$ and $v$.  This is determined by what the downstream task needs --- for example, the grammatical-class distance between \emph{``their''} (possessive determiner) and \emph{``there''} (locative adverb) is fixed by the language, not by the model's architecture.
 
  \item  \textbf{Input proximity} ($\delta$): The distance $\|h_u - h_v\|$ between the hidden states that the attention layer produces for $u$ and $v$.  When two tokens appear in nearly identical contexts with same surrounding words and same syntactic position - attention cannot distinguish them, and $\delta$ is small.  The FFN receives $h_u$ and $h_v$ as inputs; it cannot \emph{choose} to receive different inputs.
 
  \item \textbf{FFN change limit} ($L_{\mathrm{FFN}}$): The Lipschitz constant controls how rapidly the FFN output can change per unit change in input.  While $L_{\mathrm{FFN}}$ depends on the weights and could in principle be made large, doing so causes exploding gradients and training instability.  A large $L_{\mathrm{FFN}}$ amplifies the FFN's response to \emph{every} input perturbation which is not only limited to the gap between $h_u$ and $h_v$.  This sharply degrades performance on the majority of tokens whose hidden states are not collapsed.
\end{itemize}

\section{Full Proof of Proposition~\ref{prop:gen}}
\label{app:proof}
 
\textit{We prove that for any $\epsilon > 0$ there exist finite router parameters
$W_r^\ell$ such that $\norm{m^\ell(v)} < \epsilon$ for all $v \in \V$ and
$\ell \in \{1,\ldots,L\}$.}
 
Given $M_{K+1}(v) = \bzero$ by definition, for any router weight
$\alpha_{K+1}^\ell \in (0,1)$:
\[
  \alpha_{K+1}^\ell \cdot M_{K+1}(v) = \alpha_{K+1}^\ell \cdot \bzero = \bzero.
\]
This ensures that null bank contributes nothing to \tide{}. The memory term therefore simplifies to:
\[
  m^\ell(v) = \sum_{k=1}^{K+1} \alpha_k^\ell\, M_k(v)
            = \sum_{k=1}^{K} \alpha_k^\ell\, M_k(v).
\]
 
By the softmax constraint~\eqref{eq:router},
$\sum_{k=1}^{K} \alpha_k^\ell = 1 - \alpha_{K+1}^\ell$.
Applying the triangle inequality, we can bound the memory norm as:
\begin{equation}
  \norm{m^\ell(v)}
  \;\leq\; \sum_{k=1}^{K} \alpha_k^\ell\, \norm{M_k(v)}
  \;\leq\; \bigl(1 - \alpha_{K+1}^\ell\bigr) \cdot C,
  \label{eq:normbound}
\end{equation}
where $C = \max_{v \in \V,\, k \leq K} \norm{M_k(v)} < \infty$.
 
Next, we express the active weight sum in terms of the null logit. Upon setting $z_{K+1}^\ell = s > 0$ and $z_k^\ell = 0$ for all $k \leq K$. By softmax:
\begin{equation}
  1 - \alpha_{K+1}^\ell = \frac{K}{K + e^s}.
  \label{eq:activesum}
\end{equation}
As $s \to \infty$, $K/(K + e^s) \to 0$, so the total active bank weight
vanishes.
 
Substituting~\eqref{eq:activesum} into~\eqref{eq:normbound}:
$\norm{m^\ell(v)} \leq KC/(K + e^s)$.
For any $\epsilon \in (0, C)$, solving $KC/(K + e^s) = \epsilon$ gives:
\begin{equation}
  s^* = \log\!\left(\frac{K(C - \epsilon)}{\epsilon}\right).
  \label{eq:sstar}
\end{equation}
Now, we have: $KC/(K + e^{s^*}) = KC\epsilon/KC = \epsilon$. Therefore
$\norm{m^\ell(v)} \leq \epsilon$ uniformly over all $v$ and $\ell$ for any
$s \geq s^*$.
 
Note that, the threshold $s^*$ is finite for any $\epsilon \in (0,C)$. Setting $W_r^\ell$
so that $W_r^\ell \tilde{n}^\ell \approx s^*\,\mathbf{e}_{K+1}$ is a finite
parameter assignment under which $\norm{m^\ell(v)} < \epsilon$ for all $v$ and
$\ell$.
 
\paragraph{Additional Remark:}
From~\eqref{eq:activesum}, $\sum_{k=1}^K \alpha_k^\ell = K/(K +
e^{z_{K+1}^\ell})$ depends \textit{only} on the null logit $z_{K+1}^\ell$. A
single large null logit jointly suppresses all $K$ active banks through softmax
competition facilitating the suppression degree of freedom to one scalar, regardless of $K$.

\section{Full Proof of Proposition~\ref{prop:k_pathway}: K-Pathway Gradient Amplification}
\label{app:k_amplification}
 
For simplicity, Proposition~\ref{prop:k_pathway} is stated for a simplified router over $K$ active banks that excludes the null bank at slot $K{+}1$.
 
For a fixed token $v \in \mathcal{V}$ and let $e^{(k)}_v$ denote the row of embedding table $E_k$ corresponding to $v$, for $k = 1,\ldots,K$. Define $X_s := \mathbf{1}[v \in \text{batch}_s]$. As established in Appendix~\ref{app:grad_starvation_proof} with Bernoulli Bound, we have:
\begin{equation}
\E[X_s] = 1 - (1-f_v)^{BT} =: \kappa_v \leq f_v \cdot BT.
\end{equation} 
Given that the $K$ \memblock{}s have no shared parameters, the gradient with respect to $e^{(k)}_v$ is identically zero whenever $v \notin \text{batch}_s$, and otherwise:
\begin{equation}
  \nabla_{e^{(k)}_v} \mathcal{L}_s = \sum_{\ell=1}^{L} \alpha^\ell_k \cdot \frac{\partial \mathcal{L}_s}{\partial m^\ell(v)} \cdot \frac{\partial M_k(v)}{\partial e^{(k)}_v},
\end{equation}
where $\partial \mathcal{L}_s / \partial m^\ell(v)$ is the upstream gradient from layer $\ell$'s residual stream. Since $M_k(v)$ enters every layer, each block accumulates gradient contributions across all $L$ layers.
 
During training, whenever $v \in \text{batch}_s$ and the loss has not been minimized on token $v$, we assume the standard non-degeneracy condition: for each $k$, there exists at least one layer $\ell^*$ such that
\begin{equation}
\label{eqn:lower-bound-mk}
  \norm{\nabla_{e^{(k)}_v} \mathcal{L}_s}^2 \geq G^2_{\min} > 0 \quad \text{on the event } \{v \in \text{batch}_s\}.
\end{equation}
 
Since the $K$ blocks are independent and each satisfies Equation \ref{eqn:lower-bound-mk}, and $\nabla_{e^{(k)}_v}\mathcal{L}_s = \mathbf{0}$ exactly when $X_s = 0$ (token $v$ absent from batch), we have $\norm{\nabla_{e^{(k)}_v}\mathcal{L}_s}^2 \geq G^2_{\min} \cdot X_s$. Summing across $K$ blocks: 
\begin{equation}
  \sum_{k=1}^{K} \norm{\nabla_{e^{(k)}_v} \mathcal{L}_s}^2 \;\geq\; K \cdot G^2_{\min} \cdot X_s.
\end{equation}
Taking expectations and summing over $\tau$ steps completes the proof of \eqref{eq:k_amplification}:
\begin{equation}
  \E\!\left[\sum_{s=1}^{\tau} \sum_{k=1}^{K} \norm{\nabla_{e^{(k)}_v} \mathcal{L}_s}^2\right] \;\geq\; K \cdot \tau \cdot \kappa_v \cdot G^2_{\min}. 
\end{equation}
 
\textbf{Additional Remark:} The standard transformer upper bound gives $\E\!\left[\sum_s \norm{\nabla_{e_v} \mathcal{L}_s}^2\right] \leq \tau \cdot f_v \cdot BT \cdot G^2$ from Appendix~\ref{app:grad_starvation_proof}. The \tide{} lower bound is $K$ times the analogous single-block lower bound, confirming a $K$-fold amplification under the assumption $G^2/G^2_{\min} = O(1)$.

\section{Additional Details for Proposition~\ref{prop:zero-collapse}: Contextual Collapse and \tide{}'s $K$-\memblock{}s}
\label{app:collpase-proof}
\textit{Proposition~\ref{prop:zero-collapse} state that for any collapsed pair $(u,v)\in\mathcal{C}_\delta^{(\ell)}$ with $\norm{h^{(\ell)}_u - h^{(\ell)}_v} \leq \delta$, and any target separation $C > 0$, there exist \embmem{} parameters $\{E_k\}_{k=1}^{K}$ such that $\norm{M_k(u) - M_k(v)} = C$ for any $K \geq 1$.}
 
\medskip
From Equation \ref{eq:memblock}, we have $M_k(v) = \mathrm{RMSNorm}(E_k[v])$, where $E_k[v]$ is the row of $E_k \in \mathbb{R}^{|\mathcal{V}|\times d_b}$ selected by discrete index $v$. 

The hidden state $h^{(\ell)}_v$ does not appear in~\eqref{eq:memblock}, so $M_k(u)$ and $M_k(v)$ are independent of $\delta = \norm{h^{(\ell)}_u - h^{(\ell)}_v}$. This stands in direct contrast to the FFN, for which Lipschitz continuity forces for the $\norm{\mathrm{FFN}(h_u) - \mathrm{FFN}(h_v)} \leq L_{\mathrm{FFN}}\delta$ regardless of the weights chosen, bounding the output separation from above by $L_{\mathrm{FFN}}\delta$.

\medskip
Since rows $E_k[u]$ and $E_k[v]$ are uncoupled parameters, the table $E_k$ assigns one dedicated row per vocabulary entry with no joint constraint, so assigning one row places
no restriction on the other, regardless of $\delta$. $M_k(u)$
and $M_k(v)$ can be set to any two vectors in the output range of
$\mathrm{RMSNorm}$ independently.
In particular, for any target $C > 0$ there trivially exist row
assignments such that $\norm{M_k(u) - M_k(v)} = C$, regardless of
$\delta = \norm{h^{(\ell)}_u - h^{(\ell)}_v}$ and independently of
$L_{\mathrm{FFN}}$. 

\begin{figure}[h]
    \centering
    % \vspace{-1em}
    \includegraphics[width=0.99\linewidth]{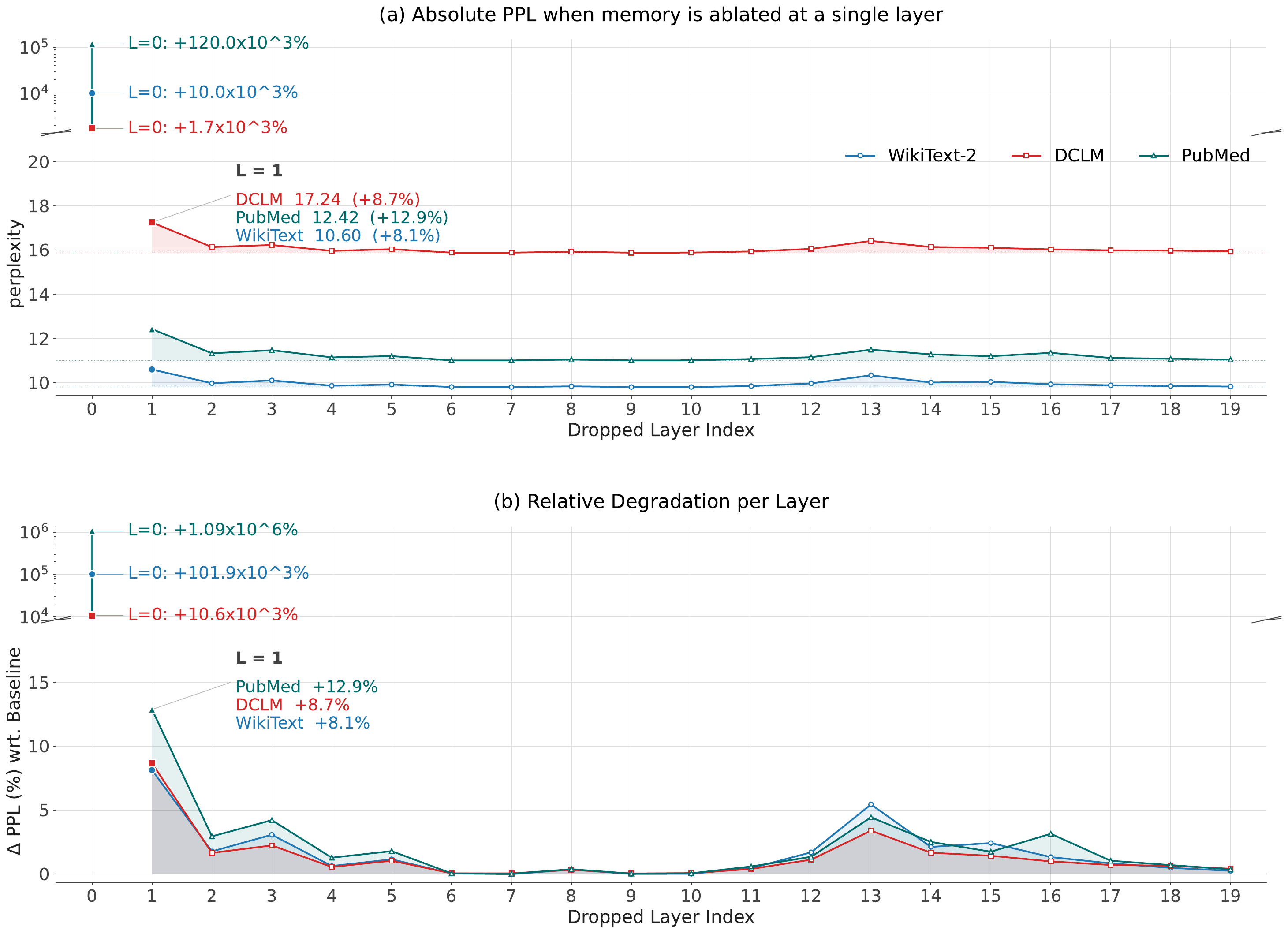}
    \caption{\textbf{Marginal contribution of each layer's memory injection in
\tide{}.} The routed \embmem{} output at one layer $\ell$ is zeroed
while all other layers retain their full pathway; we report perplexity on
WikiText-2, DCLM, and PubMed. Relatively higher degradation in PubMed performance also aligns with our rare-token problem.}
    % \vspace{-1em}
    \label{fig:layer_drop_ablation}
\end{figure}

\section{Understanding Layer-wise Contribution of \embmem{}}
\label{appendix:layerwise_contribution}

In \tide{}, we propose to add \embmem{} to the residual stream at every transformer
layer, a natural question arises: \emph{Is each layer's memory injection
equally important, or does it contribute disproportionately at certain
depths of the network?} To probe this, we sweep the layer index
$\ell \in \{0, 1, \ldots, L-1\}$ of our \tide-1B model and, at each
sweep point, replace the routed memory contribution at layer $\ell$ alone
with the zero vector while leaving the router and \memblock{}s pathway at every
other layer untouched. Concretely, the standard \tide{} forward pass at layer
$\ell$ is given as per Equation \ref{eq:tide_model} as:
\[
h^\ell = \tilde{h}^\ell + \FFN\!\bigl(\tilde{n}^\ell\bigr) + m^\ell(v).
\]
becomes $h^\ell = \tilde{h}^\ell + \FFN\!\bigl(\tilde{n}^\ell\bigr)$ for the single ablated layer  $\ell$, where
$\tilde{h}_\ell = \mathrm{RMSNorm}(h_\ell + \mathrm{Attn}_\ell(h_\ell))$
and $\mathrm{Mem}_\ell$ denotes the routed sum over the $K{=}24$ \memblock{}s.
We then evaluate perplexity on WikiText-2, DCLM, and PubMed, repeating for
every $\ell$ to obtain a per-layer marginal-contribution profile.

\textbf{Memory contribution is intermittent, not monotone:} Figure~\ref{fig:layer_drop_ablation} reveals that the contribution of \embmem{} is \textit{not uniform across depth} of the trained \tide{} checkpoint. Dropping layer~$0$ collapses the model entirely and the perplexity rises by more than $10^3\%$ on every dataset and reaches $1.09 \times 10^6\%$ on PubMed, indicating that the first memory injection carries an irreplaceable importance for the model. Layer~$1$ remains substantially load-bearing (\mbox{$+8.1$\%} to $+12.9\%$ across datasets), suggesting a brief consolidation period during which token-identity information is propagated into the residual stream. After this, degradation falls sharply: every layer in the contiguous range~$\ell \in [4, 12]$ lead to less than $\sim$2\% PPL cost.

We additionally observe a clear secondary peak at layer~$13$, where layer dropping costs new spike in performance degradation. We interpret this pattern as evidence that token-identity information injected by early \embmem{} layers \emph{persists} in the residual stream for several intermediate layers, during which any single memory contribution becomes redundant. Once this token-identity signal is consumed by the ongoing contextual computation, memory information is again required to refresh it at intermittent intervals.

\section{Understanding Decoding Cost with \memblock{}s}
\label{app:inference_cost}
In this section, we aim to investigate how does our proposed architecture \tide{}-1B empirically perform wrt. LLaMa-\texttt{Base}-1B in terms of decoding speed (ms/token). All experiments are reported using 1$\times$B200 GPU averaged across  5,120 generated tokens.   
\begin{table}[h]
    \centering
    \caption{Token Decoding estimated for \tide{}-1B variants in comparison to LLaMa-\texttt{Base}-1B transformer model.}
    \resizebox{\textwidth}{!}{
    \begin{tabular}{c|cccccc}
    \toprule
    & LLaMa-\texttt{Base}-1B & \tide{}-2E-1B & \tide{}-4E-1B & \tide{}-8E-1B &\tide{}-16E-1B & \tide{}-24E-1B \\
    \midrule
    Decoding Speed (ms/token) & 11.085 & 11.236 & 11.854 & 12.688 & 12.901 & 13.422\\
    \bottomrule
    \end{tabular}}
    
\end{table}

\section{Investigating Compression Strategies for \embmem{}}
\label{app:compression_tide}

Our proposed \tide{} architectures provide an opportunity to offload each static \memblock{}s within the \embmem{} to storage devices in resource-constrained settings with asynchronous pre-fetching. This leads to the question: \textit{How does the trade-off between VRAM and storage devices look like and what can be done to minimize \memblock{}s storage cost?}

We first estimate the total storage cost of \memblock{} as follows:
\begin{equation}
    \text{Storage Size} =  \mathrm{vocab\_size} \times \mathrm{num\_blocks} \times  \mathrm{hidden\_dim} \times \mathrm{bits\_per\_param }
\end{equation}
For our \tide{}-8E-1B model with 24 layers and 2048 hidden dimension which are trained with LLaMa-3.1 tokenizer with 128,256 vocabulary size, $\sim$4.2 GB of storage space is required for \embmem{} with F16 precision. To address our question, we performed a preliminary investigation\footnote{An effective novel compression technique for \embmem{} compression is out of the scope of this work. Our preliminary investigation reveals a high redundancy within the \memblock{}s and leaves sophisticated studies to capitalize these redundancies as future work.} of storage challenges of \embmem{} from \underline{\textbf{two}} different perspectives:
\begin{itemize}
    \item [D1.] Quantization of \embmem{}, and
    \item [D2.] Low Rank compression of individual \memblock{},
\end{itemize}

\subsection{Quantization of \embmem{}.}
\begin{table}[h]
    \centering
    \caption{Performance comparison of \tide{}-8E-1B with various low-precision of \memblock{}s.}
    \vspace{0.5em}
    \resizebox{0.6\textwidth}{!}{
    \begin{tabular}{cc|ccc}
    \toprule
    \textbf{Precision} & \textbf{Size (GB)} & \textbf{Wikitext-2 ($\downarrow$)} & \textbf{PubMed ($\downarrow$)} & \textbf{DCLM ($\downarrow$)}\\
    \midrule
    16-bit & 4.20 GB & 10.088 & 11.100 & 16.108\\
    8-bit & 2.10 GB & 10.089 & 11.100 & 16.113\\
    4-bit & 1.05 GB & 10.263 & 11.277 & 16.343\\
    \bottomrule
    \end{tabular}}
    
    \label{tab:placeholder}
\end{table}

\subsection{Low Rank Compression of Token-wise \memblock{}s.}

\label{appendix:low_rank_compression}

Several recent works \citep{li2023losparse, wang2023cuttlefish, kaushal2023lord} have
explored the low-rank characteristics associated with weights and gradients to
address storage demands and computational complexity linked to the large matrices
of LLMs. For our \tide-8E-1B model checkpoint with 8 \memblock{}s, each block
holds an embedding table $M_k \in \mathbb{R}^{|\mathcal{V}| \times d_b}$ that maps
a token index $v \in \mathcal{V}$ to a $d_b$-dimensional vector. A rank-$r$ SVD
decomposition of $M_k$ yields two matrices $U \in \mathbb{R}^{|\mathcal{V}| \times r}$
and $V \in \mathbb{R}^{r \times d_b}$, and rather than storing $M_k$ directly we
can store the factored representation $(U, V)$ provided $r$ is small enough that
the factored form has fewer parameters. We estimate the rank $r$, below which storage of $(U, V)$ will save space as follows:
\begin{equation}
(|\mathcal{V}| \cdot r) + (r \cdot d_b) \;\le\; |\mathcal{V}| \cdot d_b
\quad\Longrightarrow\quad
r \;\le\; \frac{|\mathcal{V}| \cdot d_b}{|\mathcal{V}| + d_b}.
\label{eq:rank-bound}
\end{equation}

For \tide-8E-1B with hidden dimension $d=2048$, bottleneck dimension $d_b=2048$,
and the \mbox{LLaMA-3.1} tokenizer of vocabulary size $|\mathcal{V}|=128{,}256$,
Equation~\ref{eq:rank-bound} gives $r \le 2015$, so any uniform rank reduction
of at least $\sim$2\% across the eight \memblock{}s is sufficient to reduce
storage relative to the dense parameterization.

\begin{figure}[h]
  \centering
  \includegraphics[width=\linewidth]{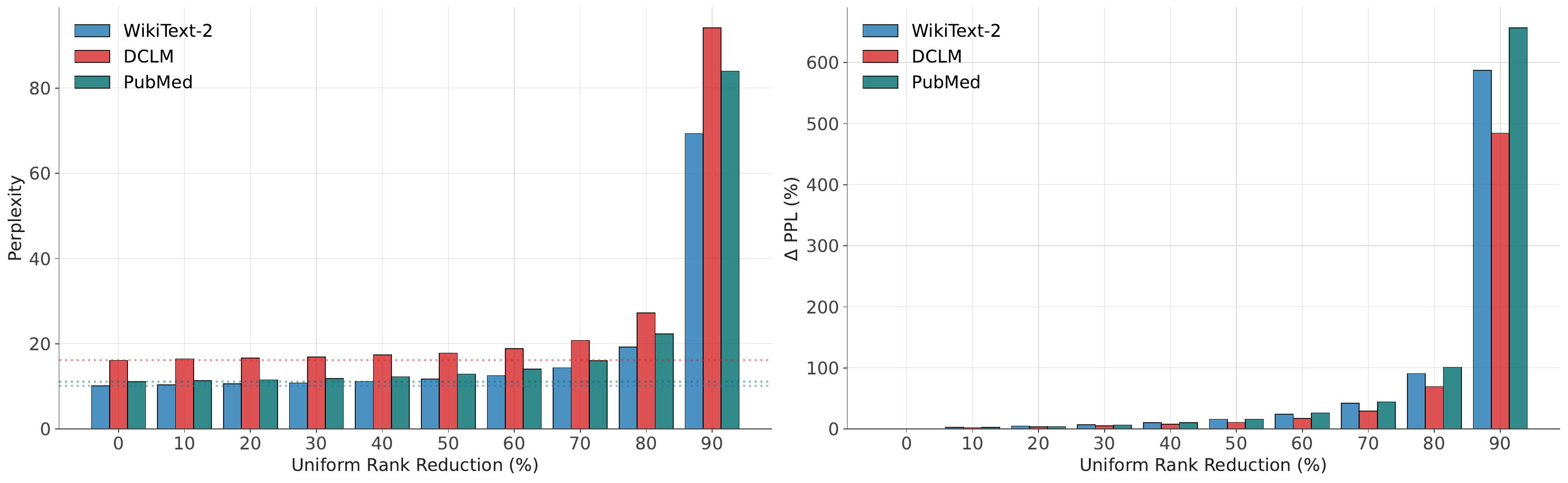}
  \caption{\textbf{Uniform rank reduction across all 8 \memblock{}s of
  \tide-8E-1B.} Each $M_k \in \mathbb{R}^{|\mathcal{V}| \times d_b}$ is replaced
  by its rank-$r$ SVD approximation with $r = \lceil(1-p) \cdot d_b\rceil$
  applied identically to every block.
  \textbf{(a)} Absolute perplexity on WikiText-2, DCLM, and PubMed; dotted
  horizontal lines mark each dataset's uncompressed baseline. \textbf{(b)}
  Relative degradation of perplexity which is largely flat through $\sim$30\% reduction, degrades gradually
  through $\sim$60\%, and rises sharply beyond 70\%.}
  \label{fig:rank-reduction}
\end{figure}

Figure~\ref{fig:rank-reduction}
sweeps the uniform reduction percentage from 0\% to 90\% in 10\% increments and
reports perplexity on WikiText-2, DCLM, and PubMed. At modest reductions (10--30\%, $r \in [1434, 1844]$), perplexity degradation remains almost flat while parameter count per \memblock{} drops to as little as 71\% of
the dense form. At moderate reductions (40--60\%, $r \in [820, 1229]$),
degradation grows around $\sim$10\% to $\sim$25\% but remains gradual. However, beyond 70\% reduction the curves bend sharply upward and the
relative \mbox{$\Delta$PPL} reaches 587\% on WikiText-2, 484\% on DCLM, and
657\% on PubMed. Our findings indicate that a
\tide{} \memblock{}s can be significantly compressed up to a $\sim$50\% rank reduction
($r=1024$, halving the per-block parameter count) at uniform with marginal drop in performance for practical purpose with limited resource availability. Provided the existence of non-uniform low-rank properties across different layers \citep{jaiswal2024galore}, we strongly believe that \embmem{} can be further compressed using non-uniform rank reduction techniques for relatively superior performance compared to uniform SVD.

\section{K-Nearest Neighbor Study of Base Embedding and \memblock{}s} 

To understand Base and \memblock{} embeddings from semantic perspective, we ask an interesting question: \textit{At individual token-level, do the \memblock{}s recover semantic neighbors that the primary embedding $E$ has failed to learn during training?} To probe this, we compute the per-token Jaccard overlap $J_k$ between the top-10 cosine neighbors across 200 randomly sampled rare and common token under $E$ and each $M_k$. 

In Figure \ref{fig:jaccard_rare_common}, we present the distribution of $J_k$ over tokens and find that rare-token boxes lie consistently below the common-token boxes for every $K$. It indicates that for common tokens, the neighbor sets agree more closely with $E$ but for rare tokens neighbor sets  are \textit{substantially disjoint} adding complementary and non-overlapping information about them. For example, in Table \ref{tab:knn_examples} with concrete examples: for a rare token \texttt{asynchronously}, $E$'s top-10 are dominated by adverbs ending in \{-ly\}; but each \memblock{}s contribute additional closely related technical information (\emph{e.g.} $M_2$ surfaces \emph{Asynchronous JavaScript and XML, callbacks}; $M_3$ surfaces \emph{defensively, securely}) in pursuit to enhance the semantic structure.

The rare-name query \texttt{fred} shows the same pattern: $E$ returns only first-name neighbors, while individual \memblock{}s additionally recover orthographic variants (\texttt{Fred}, \texttt{Frederick}, \texttt{Freddy}), tokenizer fragments (\texttt{Freder}), and cross-lingual variants (\texttt{Hans}, \texttt{Viktor}). The complementary-information picture therefore not a global statistical artifact but reflects genuine per-token specialization of \memblock{}s.

\begin{figure}[h]
    \centering
    \includegraphics[width=0.5\linewidth]{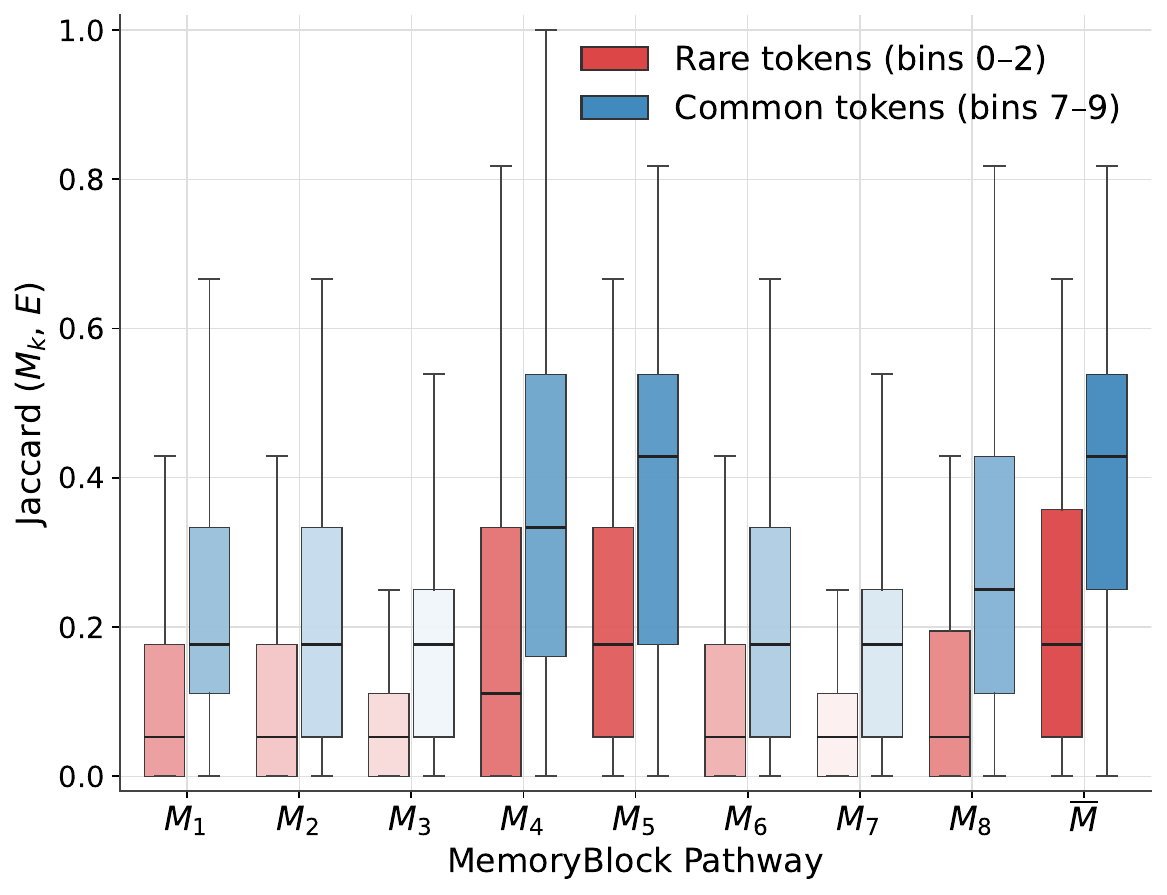}
    \caption{Cosine-Nearest Agreement between primary embedding $E$ and memory blocks $M_k$ across for \emph{rare} and \emph{common} tokens. Rare-token boxes lie consistently below common-token boxes indicating memory blocks pathways encode neighbor sets that are \textit{substantially disjoint} from $E$ for the rare tokens and add complementary new information to model.}
    \label{fig:jaccard_rare_common}
\end{figure}

\begin{table}
\centering
\caption{
Top-10 Cosine-Nearest neighbors of two rare query tokens in the primary embedding table $E$ and across 8 \memblock{} in \tide{}-8E-1B ($M_1, M_2, \ldots, M_8$) model checkpoints. Row backgrounds encode per-block Jaccard rank against the Base top-10 within each query, with \emph{darker} shades indicating \emph{higher} $J_k$ (more neighbor-set agreement with Base).
}
\label{tab:knn_examples}
\vspace{1.0em}
\begin{adjustbox}{max width=\textwidth}
\begin{tabular}{@{} c l l r @{}}
\toprule
\textbf{Query} & \textbf{Pathway} & \textbf{Top-10 nearest neighbors (cosine)} & $\mathbf{J_k}$ \\
\midrule
\rowcolor[HTML]{FADBD8}
\cellcolor{white} & Base $E$ & \texttt{asynchronous}, \texttt{ynchronously}, \texttt{sequentially}, \texttt{recursively}, \texttt{dynamically}, \texttt{ynchronous}, \texttt{concurrently}, \texttt{independently}, \texttt{horizontally}, \texttt{seamlessly} & -- \\
\rowcolor[HTML]{C2DBED}
\cellcolor{white} & $M_{1}$ & \texttt{asynchronous}, \texttt{ynchronously}, \texttt{synchronous}, \texttt{ynchronous}, \texttt{silently}, \texttt{globally}, \texttt{callback}, \texttt{concurrently}, \texttt{digitally}, \texttt{unsafe} & 0.25 \\
\rowcolor[HTML]{D3E5F2}
\cellcolor{white} & $M_{2}$ & \texttt{asynchronous}, \texttt{ynchronously}, \texttt{Asynchronous JavaScript and XML}, \texttt{callbacks}, \texttt{synchronous}, \texttt{ynchronous}, \texttt{LSD}, \texttt{hashtags}, \texttt{conditionally}, \texttt{breathable} & 0.18 \\
\rowcolor[HTML]{7FB3D9}
\cellcolor{white} & $M_{3}$ & \texttt{ynchronously}, \texttt{asynchronous}, \texttt{recursively}, \texttt{ynchronous}, \texttt{sequentially}, \texttt{defensively}, \texttt{resonate}, \texttt{dynamically}, \texttt{callbacks}, \texttt{securely} & 0.43 \\
\rowcolor[HTML]{B1D1E8}
\cellcolor{white} & $M_{4}$ & \texttt{asynchronous}, \texttt{ynchronously}, \texttt{concurrently}, \texttt{synchronous}, \texttt{ynchronous}, \texttt{dynamically}, \texttt{anonymously}, \texttt{externally}, \texttt{parallel}, \texttt{simultaneously} & 0.33 \\
\rowcolor[HTML]{A1C7E3}
\cellcolor{white} & $M_{5}$ & \texttt{asynchronous}, \texttt{ynchronously}, \texttt{simultaneously}, \texttt{recursively}, \texttt{spontaneously}, \texttt{sequentially}, \texttt{efficiently}, \texttt{tirelessly}, \texttt{separately}, \texttt{independently} & 0.33 \\
\rowcolor[HTML]{F5F9FC}
\cellcolor{white} & $M_{6}$ & \texttt{asynchronous}, \texttt{synchronous}, \texttt{Asynchronous JavaScript and XML}, \texttt{instantiated}, \texttt{serialize}, \texttt{scalable}, \texttt{serialized}, \texttt{ynchronously}, \texttt{caching} & 0.11 \\
\rowcolor[HTML]{E4EFF7}
\cellcolor{white} & $M_{7}$ & \texttt{asynchronous}, \texttt{ynchronously}, \texttt{manually}, \texttt{Premiere}, \texttt{optionally}, \texttt{reordered}, \texttt{indefinitely}, \texttt{RMS}, \texttt{factorial}, \texttt{charcoal} & 0.11 \\
\rowcolor[HTML]{90BDDE}
\cellcolor{white} \multirow{-9}{*}{\rotatebox[origin=c]{90}{\textbf{\texttt{asynchronously}}}} & $M_{8}$ & \texttt{asynchronous}, \texttt{synchronous}, \texttt{ynchronously}, \texttt{ynchronous}, \texttt{dynamically}, \texttt{synchronized}, \texttt{concurrently}, \texttt{electronically}, \texttt{simultaneously}, \texttt{automatically} & 0.33 \\
\midrule
\rowcolor[HTML]{FADBD8}
\cellcolor{white} & Base $E$ & \texttt{Fred}, \texttt{Fred}, \texttt{Larry}, \texttt{Roger}, \texttt{Doug}, \texttt{Charlie}, \texttt{Sean}, \texttt{jim}, \texttt{Mike}, \texttt{Jake} & -- \\
\rowcolor[HTML]{90BDDE}
\cellcolor{white} & $M_{1}$ & \texttt{Fred}, \texttt{Fred}, \texttt{joe}, \texttt{john}, \texttt{Ginny}, \texttt{Frederick}, \texttt{Doug}, \texttt{Lena}, \texttt{Woody}, \texttt{zar} & 0.18 \\
\rowcolor[HTML]{F5F9FC}
\cellcolor{white} & $M_{2}$ & \texttt{Fred}, \texttt{Fred}, \texttt{alf}, \texttt{Freder}, \texttt{Maggie}, \texttt{Carlo}, \texttt{Viktor}, \texttt{alan}, \texttt{Noel}, \texttt{Amit} & 0.11 \\
\rowcolor[HTML]{E4EFF7}
\cellcolor{white} & $M_{3}$ & \texttt{Fred}, \texttt{Fred}, \texttt{fred}, \texttt{mary}, \texttt{bob}, \texttt{Bob}, \texttt{Frederick}, \texttt{Bob}, \texttt{Freder}, \texttt{Herbert} & 0.11 \\
\rowcolor[HTML]{7FB3D9}
\cellcolor{white} & $M_{4}$ & \texttt{Fred}, \texttt{Fred}, \texttt{martin}, \texttt{Martin}, \texttt{Zack}, \texttt{Bernie}, \texttt{Frederick}, \texttt{alex}, \texttt{Charlie}, \texttt{Albert} & 0.18 \\
\rowcolor[HTML]{D3E5F2}
\cellcolor{white} & $M_{5}$ & \texttt{Fred}, \texttt{Fred}, \texttt{christ}, \texttt{Nora}, \texttt{brahim}, \texttt{Dani}, \texttt{tek}, \texttt{Enterprise}, \texttt{Yo}, \texttt{Practice} & 0.11 \\
\rowcolor[HTML]{C2DBED}
\cellcolor{white} & $M_{6}$ & \texttt{Fred}, \texttt{Fred}, \texttt{Katy}, \texttt{Hans}, \texttt{Ogre}, \texttt{Nel}, \texttt{Sing}, \texttt{Ian}, \texttt{Berk}, \texttt{Toby} & 0.11 \\
\rowcolor[HTML]{B1D1E8}
\cellcolor{white} & $M_{7}$ & \texttt{Fred}, \texttt{Fred}, \texttt{fred}, \texttt{Ron}, \texttt{Doug}, \texttt{COST}, \texttt{Todd}, \texttt{Evelyn}, \texttt{Lindsey}, \texttt{Apache} & 0.11 \\
\rowcolor[HTML]{A1C7E3}
\cellcolor{white} \multirow{-9}{*}{\rotatebox[origin=c]{90}{\textbf{\texttt{fred}}}} & $M_{8}$ & \texttt{Fred}, \texttt{Fred}, \texttt{fred}, \texttt{Frederick}, \texttt{Freddy}, \texttt{Ned}, \texttt{Freddie}, \texttt{Geoff}, \texttt{Fried}, \texttt{red} & 0.11 \\
\bottomrule
\end{tabular}
\end{adjustbox}
\end{table}

\section{Model Training Implementation Details}
\label{app:Implementation_config}

\begin{figure}[h]
    \centering
    \includegraphics[width=0.75\linewidth]{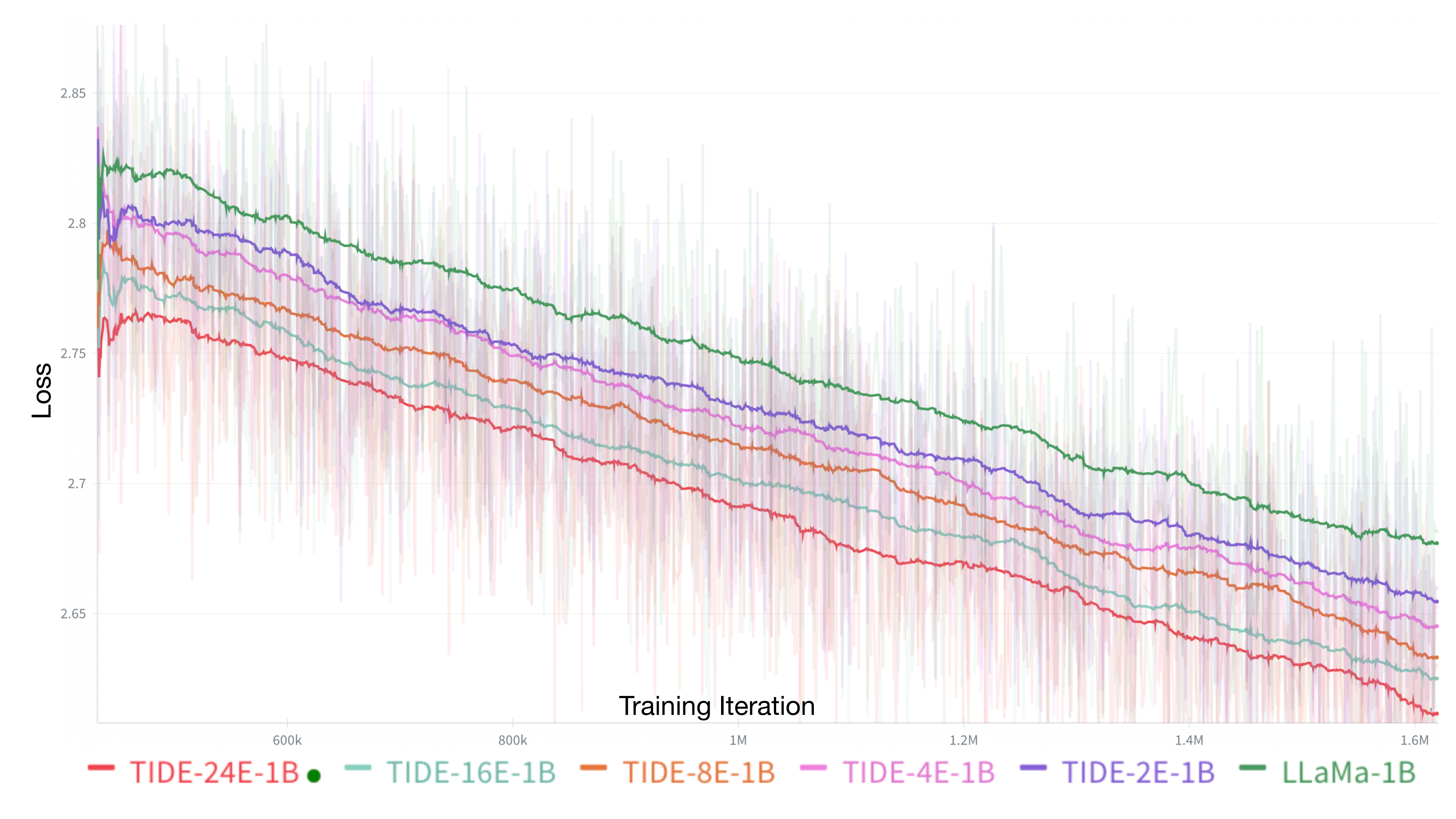}
    \caption{Training Loss Comparison of LLaMa-1B and TIDE-1B with 2, 4, 8, 16, and 24 \memblock{}s.}
    \label{fig:placeholder}
\end{figure}

\begin{table}[h]
    \centering
     \caption{Model training configurations for our $\mathrm{Base}$, \tide{} Models. All model checkpoints are trained within the paper adopt exactly same configuration for fair comparison.}
     \vspace{0.5em}
    \begin{tabular}{c|c|c}
    \toprule
    \textbf{Category} & \textbf{Key} & \textbf{Value} \\
    \midrule
        Common & Tokens Count & 400-500 Billion \\
        & Vocabulary size  &  128,256 \\
        & Tokenizer & meta-llama/Llama-3.1-8B \\
        & Dataset & mlfoundations/dclm-baseline-1.0 \\
        % & Padding Idx & 128,001 \\
        & Sequence Length & 2048 \\
        & Hidden Activation & SiLU \\
        \midrule
        Loss & Name & Cross Entropy \\
        & Z-loss & 1.0e-6 \\ 
        \midrule
        Optimizer & Name & Adam  \\
        & Weight Decay & 0.1\\
        & Beta1 & 0.9\\
        & Beta2 & 0.95 \\
        \midrule
        Schedular & Warmup Initial LR & 1e-06 \\
        & Warmup Iterations & 10000 \\
        & Type & Cosine \\
        & Max LR & 1.0e-04 \\
        & Min LR & 1.0e-05 \\
    \bottomrule
    \end{tabular}
   
    \label{tab:training_config}
\end{table}

\section{Limitations and Future Work}
\label{app:limitation}
While TIDE delivers consistent gains across model scales and downstream
tasks, we would like to acknowledge some limitations. \ding{182} \textit{Storage overhead:} Despite the EmbeddingMemory tables are static and quantization friendly, the SSD footprint still scales linearly with $K$. Deployments with strict storage budgets need to rely on the compression strategies discussed in Appendix~\ref{app:compression_tide}, along with conventional techniques \citep{jaiswal2023emergence,jaiswal2024ffn, li2023losparse,yin2023outlier} to reduce SSD overhead. \ding{183} Our experiments cover model scales from 750M to 3B parameters trained on 200--500B tokens of DCLM, with evaluations on WikiText, PubMed, DCLM, and eight zero-shot benchmarks. The benefits of \tide{} remain unexplored for longer training horizons, after instruction tuning or RLHF and is left to future work. \ding{184} Per-block router statistics and nearest-neighbor analyses (Appendix~\ref{tab:knn_examples}) suggest that distinct MemoryBlocks specialize to distinct frequency regimes, we do not provide a principled account of \emph{what} each block learns. A more fine-grained interpretability study of memoryblocks specialization is an important direction for future work.

\end{document}